\definecolor{fondpaille}{cmyk}{0,0,0.1,0}
\def\pb{}
\def\b{\beta}
\def\dcv{\stackrel{\scriptstyle d}{\longrightarrow}} 
\def\mn{{}}
\def\beq{\begin{equation} }
\def\eeq{\end{equation} }
\def\ep{\varepsilon}
\def\1{\mathbf{1}}
\numberwithin{equation}{section}
\def\be{\mathbf{e}}
\def\ub{\mathbf{u}}
\def\vb{\mathbf{v}}
\def\[{\lf[} \def\]{\ri]}   \def\lf{\left} \def\ri{\right}
\def\old#1{}
\def\to{\rightarrow}
\def\E{\mathbb{E}}
\def\P{\mathbb{P}}
\def\argmin{{\rm argmin}}
 \def\O{\Omega}
\def\b{\beta}
\def\F{\mathcal{F}}
\def\O{\mathcal{O}}
\def\E{\mathbb{E}}
\def\P{\mathbb{P}}
\def\mn{\medskip\noindent}
 \setlist{nolistsep}
\def\beq{\begin{equation} }
\def\eeq{\end{equation} }
\newtheorem{theo}{Theorem}
\newtheorem{lemm}{Lemma}
\newtheorem{prop}{Proposition}
\newtheorem{assu}{Assumption}
\newtheorem*{rema}{Remark}
\title{Online ICA: Understanding Global Dynamics of Nonconvex Optimization via Diffusion Processes}
\date{}
\author{
Chris Junchi Li\qquad Zhaoran Wang\qquad Han Liu
\\
Department of Operations Research and Financial Engineering, Princeton University
\\
{\{junchil, zhaoran, hanliu\}@princeton.edu}
}
\begin{document}

\maketitle

\setlength\abovedisplayskip{3pt}
\setlength\belowdisplayskip{3pt}

\begin{abstract}
Solving statistical learning problems often involves nonconvex optimization. Despite the empirical success of nonconvex statistical optimization methods, their global dynamics, especially convergence to the desirable local minima, remain less well understood in theory. In this paper, we propose a new analytic paradigm based on diffusion processes to characterize the global dynamics of nonconvex statistical optimization. As a concrete example, we study stochastic gradient descent (SGD) for the tensor decomposition formulation of independent component analysis. In particular, we cast different phases of SGD into diffusion processes, i.e., solutions to stochastic differential equations. Initialized from an unstable equilibrium, the global dynamics of SGD transit over three consecutive phases: (i) an unstable Ornstein-Uhlenbeck process slowly departing from the initialization, (ii) the solution to an ordinary differential equation, which quickly evolves towards the desirable local minimum, and (iii) a stable Ornstein-Uhlenbeck process oscillating around the desirable local minimum. Our proof techniques are based upon Stroock and Varadhan's weak convergence of Markov chains to diffusion processes, which are of independent interest.
\end{abstract}

\section{Introduction}\label{sec:intro}
 For solving a broad range of large-scale statistical learning problems, e.g., deep learning, nonconvex optimization
 methods often exhibit favorable computational and statistical efficiency empirically. However, there is still a
 lack of  theoretical understanding of the global dynamics of these nonconvex optimization methods. In specific, it
 remains largely unexplored why simple optimization algorithms, e.g., stochastic gradient descent (SGD), often exhibit fast convergence towards local minima with desirable statistical accuracy. In this paper, we aim to develop a new analytic framework to theoretically understand this phenomenon.


The dynamics of nonconvex statistical optimization are of central interest to a recent line of work.
Specifically, by exploring the local convexity within the basins of attraction, \cite{jain2013low, netrapalli2013phase, agarwal2013learning, anandkumar2014analyzing,
   wang2014optimal, wang2014nonconvex, netrapalli2014non, hardt2014understanding,
   jain2014fast, candes2014phase, balakrishnan2014statistical, zhang2014spectral, qu2014finding, gu2014sparse, sun2015provable, loh15regularized, sun2015guaranteed, arora2015simple, zhao2015nonconvex, chen2015solving, tu2015low,
   chen2015fast, bhojanapalli2015dropping, white2015local, zheng2015convergent, cai2015optimal, jain2015computing,
   wang2015high, yang2015sparse, sun2016sparse, gu2016low, tan2016sparse} establish local fast rates of convergence towards the desirable local
   minima for a variety statistical problems. Most of these characterizations of local dynamics are based on two
   decoupled ingredients from statistics and optimization: (i) the local (approximately) convex geometry of the
   objective functions, which is induced by the underlying statistical models, and (ii) adaptation of classical
   optimization analysis \cite{nesterov2003introductory, golub2012matrix} by incorporating the perturbations induced
   by nonconvex geometry as well as random noise. To achieve global convergence guarantees, they rely on various problem-specific
   approaches to obtain initializations that provably fall into the basins of attraction. Meanwhile, for some learning
   problems, such as phase retrieval and tensor decomposition for latent variable models, it is empirically
   observed that good initializations within the basins of attraction are not essential to the desirable convergence.
   However, it remains highly challenging to characterize the global dynamics, especially within the highly nonconvex
   regions outside the local basins of attraction. 

   In this paper, we address this problem with a new analytic framework based on diffusion processes. In particular, 
   we focus on the concrete example of SGD applied on the tensor decomposition formulation of independent component
   analysis (ICA). Instead of adapting classical optimization analysis accordingly to local nonconvex geometry,
   we cast SGD in different phases as diffusion processes, i.e., solutions to stochastic differential equations (SDE), by
   analyzing the weak convergence from discrete Markov chains to their continuous-time limits \cite{stroock1979multidimensional, EthierKurtz}.
The SDE automatically incorporates the geometry and randomness induced by the statistical model, which allows us to establish the exact dynamics of SGD. In contrast, classical optimization analysis only yields upper bounds on the optimization error, which are unlikely to be tight in the presence of highly nonconvex geometry, especially around
   the stationary points that have negative curvatures along certain directions. In particular, we identify three consecutive phases of the
   global dynamics of SGD, which is illustrated in Figure \ref{fig:illu}. 
   \begin{enumerate}[label=(\roman*)]
   \item We consider the most challenging initialization at a stationary point with negative curvatures, which can
   be cast as an unstable equilibrium of the SDE. Within the first phase, the dynamics of SGD are characterized by an
   unstable Ornstein-Uhlenbeck process \cite{OKSENDAL,AldousPoisson}, which departs from the initialization at a relatively slow rate and enters the
   second phase. 
   \item Within the second phase, the dynamics of SGD are characterized by the exact solution to an ordinary
   differential equation. This solution evolves towards the desirable local minimum at a relatively fast rate until it approaches a small basin around the local minimum. 
   \item Within the third phase, the dynamics of SGD are captured by a stable Ornstein-Uhlenbeck process \cite{OKSENDAL,AldousPoisson}, which
   oscillates within a small basin around the local minimum. 
   \end{enumerate}

\vskip15pt
   \begin{figure*}[!htb]
   \centering
   \includegraphics[width=0.9\textwidth]{./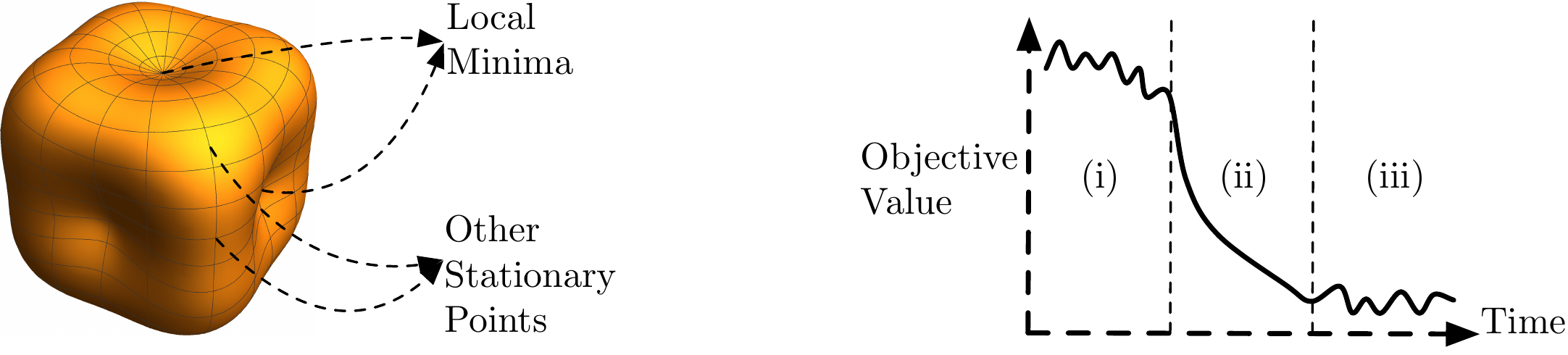}
   \caption{Left: an illustration of the objective function for the tensor decomposition formulation of ICA. Note that here we use the spherical coordinate system and add a global offset of $2$ to the objective function for better illustration. Right: An illustration of the three phases of diffusion processes.}
   \label{fig:illu}
   \end{figure*}
\vskip8pt

   \noindent\textbf{More related work.} 
   Our results are connected with a very recent line of work 
    \cite{ge2015escaping, sun2015complete, sun2015complete2, sun2015nonconvex, sun2016geometric, lee2016gradient, li2016near,
    anandkumar2016efficient, panageas2016gradient} on the global dynamics of nonconvex statistical optimization. In
    detail, they characterize the global geometry of nonconvex objective functions, especially around their
    saddle points or local maxima. Based on the geometry, they prove that specific optimization algorithms, e.g.,
    SGD with artificial noise injection, gradient descent with random initialization, and second-order
    methods, avoid the saddle points or local maxima, and globally converge to the desirable local minima. Among these
    results, our results are most related to \cite{ge2015escaping}, which considers SGD with noise injection on
    ICA. Compared with this line of work, our analysis takes a completely different approach based on diffusion
    processes, which is also related to another line of work
      \cite{darken1991towards, de2014global, su2014differential, li2015dynamics, mobahi2016training, mandt2016variational}.

    Without characterizing the global geometry, we establish the global exact dynamics of SGD, which
    illustrate that, even starting from the most challenging stationary point, it may be unnecessary to use additional
    techniques such as noise injection, random initialization, and second-order information to ensure the desirable
    convergence. In other words, the unstable Ornstein-Uhlenbeck process within the first phase itself is powerful
    enough to escape from stationary points with negative curvatures. This phenomenon is not captured by the
    previous upper bound-based analysis, since previous upper bounds are relatively coarse-grained compared with the exact
    dynamics, which naturally give a sharp characterization simultaneously from upper and lower bounds. Furthermore,
    in Section \ref{sec:phase} we will show that our sharp diffusion process-based characterization provides understanding on different phases of dynamics of our online/SGD algorithm for ICA. 

%
          A recent work \cite{li2016near} analyzes an online principal component analysis algorithm based on the
          intuition gained from diffusion approximation. In this paper, we consider a different statistical problem with a rigorous
          characterization of the diffusion approximations in three separate phases. 

                \noindent\textbf{Our contribution.} In summary, we propose a new analytic paradigm based on diffusion
                processes for characterizing the global dynamics of nonconvex statistical optimization. For SGD on
                ICA, we identify the aforementioned three phases for the first time. Our analysis is based on
                Stroock and Varadhan's weak convergence of Markov chains to diffusion processes, which are of
                independent interest.


\section{Background}\label{sec:setting}

In this section we formally introduce a special model of \emph{independent component analysis} (ICA) and the associated SGD algorithm. Let $\{\bX^{(i)}\}_{i=1}^n$ be the data sample identically distributed as $\bX\in \RR^{d}$. We make assumptions for the distribution of $\bX$ as follows. Let $\|\cdot\|$ be the $\ell_2$-norm of a vector. 
\begin{assu}\label{assu:distribution}
There is an orthonormal matrix $\Ab\in \RR^{d\times d}$ such that $\bX = \Ab \bY$, where $\bY \in \RR^{d}$ is a random vector that has independent entries satisfying the following conditions: 
\begin{enumerate}[label=(\roman*), topsep=0pt, leftmargin=*]
\item
The distribution of each $Y_i$ is symmetric about 0;
\item
There is a constant $B$ such that $\|\bY\|^2 \le B$;
\item
The $Y_1, \ldots, Y_d$ are independent with identical $m$ moments for $m\le 8$, denoted by $\psi_m \equiv \E Y_1^m$;
\item
The $\psi_1 = \E Y_i = 0$, $\psi_2 = \E Y_i^2 = 1$, $\psi \equiv \psi_4 \ne 3$.
\end{enumerate} 
\end{assu}


Assumption \ref{assu:distribution}(iii) above is a generalization of i.i.d.~tensor components. Let $\Ab = \left(\ab_1,\ldots, \ab_d \right)$ whose columns form an orthonormal basis. Our goal is to estimate the orthonormal basis $\ab_i$ from online data $\bX_1, \ldots, \bX_n$. We first establish a preliminary lemma.
\begin{lemm}\label{lemm:vTY}
Let $\Tb \!=\! \EE(\bX^{\otimes 4})$ be the 4th-order tensor whose $(i,j,k,l)$-entry is $\E\left( X_iX_jX_kX_l\right)$. Under Assumption \ref{assu:distribution}, we have
\beq\label{EuTX}
\Tb(\ub,\ub,\ub,\ub) \equiv \E\left( \ub^\top \bX \right)^4 = 3 +(\psi-3) \sum_{i=1}^d (\ab_i^\top \ub)^4.
\eeq
\end{lemm}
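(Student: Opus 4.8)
The plan is to reduce the tensor contraction $\Tb(\ub,\ub,\ub,\ub)$ to the computation of a fourth moment of a weighted sum of independent, symmetric scalars. First I would use the factorization $\bX = \Ab\bY$ from Assumption~\ref{assu:distribution} and set $v_i = \ab_i^\top\ub$, i.e.\ $\vb = \Ab^\top\ub$, so that $\ub^\top\bX = \vb^\top\bY = \sum_{i=1}^d v_i Y_i$. Since $\Ab$ is orthonormal, $\|\vb\| = \|\ub\| = 1$ (we take $\ub$ on the unit sphere, as elsewhere in the paper), hence $\sum_{i=1}^d v_i^2 = 1$; this normalization is exactly what will produce the constant $3$ in \eqref{EuTX}.

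Next I would expand
\[
  \big(\vb^\top\bY\big)^4 = \sum_{i,j,k,l=1}^d v_i v_j v_k v_l\, Y_i Y_j Y_k Y_l
\]
and take expectations term by term. By independence of $Y_1,\dots,Y_d$ together with the symmetry of each $Y_i$ about $0$ (Assumption~\ref{assu:distribution}(i)), any monomial $\E[Y_iY_jY_kY_l]$ in which some index occurs an odd number of times vanishes. Only two kinds of terms survive: (a) all four indices coincide, contributing $\sum_i v_i^4\,\E Y_i^4 = \psi\sum_i v_i^4$ by Assumption~\ref{assu:distribution}(iii)--(iv); and (b) the four positions split into two blocks carrying equal but distinct indices, which happens in $3$ ways (the number of perfect matchings of $\{1,2,3,4\}$), each contributing $\E Y_i^2\,\E Y_k^2 = 1$, for a total of $3\sum_{i\ne k} v_i^2 v_k^2$.

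Finally I would collapse the cross term via $\sum_{i\ne k} v_i^2 v_k^2 = \big(\sum_i v_i^2\big)^2 - \sum_i v_i^4 = 1 - \sum_i v_i^4$, which yields
\[
  \E\big(\ub^\top\bX\big)^4 = \psi\sum_i v_i^4 + 3\Big(1 - \sum_i v_i^4\Big) = 3 + (\psi-3)\sum_{i=1}^d v_i^4 ,
\]
and substituting back $v_i = \ab_i^\top\ub$ gives \eqref{EuTX}. I do not expect a genuine obstacle here: the only points requiring care are getting the combinatorial multiplicities in the fourth-moment expansion right and invoking the symmetry assumption to annihilate the odd cross terms; the boundedness in Assumption~\ref{assu:distribution}(ii) (and the finiteness of the moments of order $m\le 8$) is needed only to guarantee that all expectations are well defined.
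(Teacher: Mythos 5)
Your proposal is correct and follows essentially the same route as the paper's proof: reduce to $\E(\vb^\top\bY)^4$ with $\vb=\Ab^\top\ub$, expand the fourth power, use independence and symmetry of the $Y_i$ to kill all monomials with an odd exponent, identify the surviving $\psi\sum_i v_i^4$ and $6\sum_{i<k}v_i^2v_k^2$ contributions, and eliminate the cross term via $\|\vb\|^2=1$. The only superficial difference is that the paper invokes a named multinomial expansion while you expand over index tuples directly; the underlying computation is identical.
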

Lemma \ref{lemm:vTY} implies that finding $\ab_i$'s can be cast into the solution to the following population optimization problem 
\beq\label{opt1}
\argmin  -\sign(\psi-3) \cdot \E\left( \ub^\top \bX \right)^4 
= \argmin \sum_{i=1}^d - (\ab_i^\top \ub)^4~~~~~
\mbox{subject to }\|\ub\| = 1.
\eeq

It is straightforward to conclude that all stable equilibria of \eqref{opt1} are $\pm \ab_i$ whose number linearly grows with $d$. Meanwhile, by analyzing the Hessian matrices the set of unstable equilibria of \eqref{opt1} includes (but not limited to) all $\vb^* = d^{-1/2} (\pm 1,\cdots, \pm 1)$, whose number grows exponentially as $d$ increases \cite{ge2015escaping,sun2015nonconvex}. 

Now we introduce the SGD algorithm for solving \eqref{opt1} with finite samples. Let $\cS^{d-1}= \{\ub: \|\ub\| = 1\}$ be the unit sphere in $\RR^d$, and denote $\Pi \ub = \ub / \|\ub\|$ for $\ub\ne 0$ the projection operator onto $\cS^{d-1}$. With appropriate initialization, the SGD for tensor method iteratively updates the estimator via the following Eq.~\eqref{tensoralgo}:
\beq\label{tensoralgo}
\ub^{(n)} = \Pi \left\{ \ub^{(n-1)} + \sign(\psi-3)\cdot \beta \left(\ub^{(n-1)}\,^\top \bX^{(n)} \right)^3 \bX^{(n)} \right\}.
\eeq
The SGD algorithms that performs stochastic approximation using single online data sample in each update has the advantage of less temporal and spatial complexity, especially when $d$ is high \cite{li2016near,ge2015escaping}. An essential issue of this nonconvex optimization problem is how the algorithm escape from unstable equilibria. \cite{ge2015escaping} provides a method of adding artificial noises to the samples, where the noise variables are uniformly sampled from $\cS^{d-1}$. In our work, we demonstrate that under some reasonable distributional assumptions, the online data provide sufficient noise for the algorithm to escape from the unstable equilibria.

By symmetry, our algorithm in Eq.~\eqref{tensoralgo} converges to a uniformly random tensor component from $d$ components. In order to solve the problem completely, one can repeatedly run the algorithm using different set of online samples until all tensor components are found. In the case where $d$ is high, the well-known {coupon collector problem} \cite{DurrettPTE} implies that it takes $\approx d\log d$ runs of SGD algorithm to obtain all $d$ tensor components.

\begin{rema}
From Eq.~\eqref{opt1} we see the tensor structure in Eq.~\eqref{EuTX} is unidentifiable in the case of $\psi = 3$, see more discussion in \cite{anandkumar2014tensor,ge2015escaping}. Therefore in Assumption \ref{assu:distribution} we rule out the value $\psi = 3$ and call the value $\left|\psi-3\right|$ the \emph{tensor gap}. The reader will see later that, analogous to eigengap in SGD algorithm for principal component analysis (PCA) \cite{li2016near}, tensor gap plays a vital role in the time complexity in the algorithm analysis.
\end{rema}


\pb\section{Markov Processes and Differential Equation Approximation}\label{sec:infquan}
To work on the approximation we first conclude the following proposition. 
\begin{prop}\label{prop:Markov}
The iteration $\ub^{(n)}, n = 0, 1, \ldots$ generated by Eq.~\eqref{tensoralgo} forms a discrete-time, time-homogeneous Markov process that takes values on $\cS^{d-1}$. Furthermore, $\ub^{(n)}$ holds strong Markov property.
\end{prop}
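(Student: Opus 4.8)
The plan is to recognize the recursion~\eqref{tensoralgo} as a random dynamical system driven by i.i.d.\ innovations and to apply the standard fact that such a system defines a time-homogeneous Markov chain on a Polish state space. Introduce the update map $F\colon\cS^{d-1}\times\RR^{d}\to\RR^{d}$ by $F(\ub,\bx) := \Pi\bigl\{\ub + \sgn(\psi-3)\,\beta\,(\ub\T\bx)^{3}\bx\bigr\}$, so that $\ub^{(n)} = F\bigl(\ub^{(n-1)},\bX^{(n)}\bigr)$ pathwise. The only point that needs genuine care is that $\Pi$ is undefined at the origin, so I must check that the vector inside the braces never vanishes at the arguments actually visited by the algorithm. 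Writing $w := \ub + \sgn(\psi-3)\,\beta\,(\ub\T\bx)^{3}\bx$ and taking the inner product with $\ub\in\cS^{d-1}$, one gets $\ub\T w = 1 + \sgn(\psi-3)\,\beta\,(\ub\T\bx)^{4} \ge 1 - \beta\,(\ub\T\bx)^{4} \ge 1 - \beta\|\bx\|^{4}$, using $\sgn(\psi-3)\ge -1$ and Cauchy--Schwarz. Since $\bX = \Ab\bY$ with $\Ab$ orthonormal and $\|\bY\|^{2}\le B$ by Assumption~\ref{assu:distribution}(ii), every sample $\bX^{(n)}$ lies in $\{\|\bx\|^{2}\le B\}$, so on this set $\ub\T w\ge 1-\beta B^{2}>0$ once $\beta$ is in the admissible range $\beta<1/B^{2}$. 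Hence $w\ne 0$ and $F$ is a well-defined, indeed continuous, hence Borel-measurable map from $\cS^{d-1}\times\{\|\bx\|^{2}\le B\}$ into $\cS^{d-1}$.

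Next I would establish the ordinary Markov property together with time-homogeneity. Put $\F_{n} := \sigma\bigl(\ub^{(0)},\bX^{(1)},\dots,\bX^{(n)}\bigr)$. Since the samples $\{\bX^{(i)}\}$ are i.i.d.\ and independent of the initialization $\ub^{(0)}$, the innovation $\bX^{(n)}$ is independent of $\F_{n-1}$, while $\ub^{(n-1)}$ is $\F_{n-1}$-measurable; by the standard disintegration (``freezing'') lemma for conditional expectations, for every bounded Borel function $h$ on $\cS^{d-1}$,
\[
  \E\bigl[\,h(\ub^{(n)})\mid\F_{n-1}\,\bigr]
  = \E\bigl[\,h\bigl(F(\ub^{(n-1)},\bX^{(n)})\bigr)\mid\F_{n-1}\,\bigr]
  = (Ph)\bigl(\ub^{(n-1)}\bigr),
\]
where $(Ph)(\ub) := \E\bigl[h(F(\ub,\bX))\bigr]$ defines a transition kernel on $\cS^{d-1}$ that does not depend on $n$ because the $\bX^{(n)}$ are identically distributed. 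As $\sigma\bigl(\ub^{(0)},\dots,\ub^{(n-1)}\bigr)\subseteq\F_{n-1}$, the tower property collapses the conditioning onto $\ub^{(n-1)}$ alone, which is precisely the Markov property with respect to the natural filtration of $(\ub^{(n)})$, with $n$-independent kernel $P$; moreover $\ub^{(n)}\in\cS^{d-1}$ for all $n$ since $\ub^{(0)}\in\cS^{d-1}$ and $F$ takes values in $\cS^{d-1}$.

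Finally, for the strong Markov property I would invoke the general fact that a discrete-time time-homogeneous Markov chain on a Polish space is automatically strongly Markov. Concretely, given a stopping time $\tau$ of the natural filtration, I decompose over the countably many events $\{\tau=k\}$, $k\in\mathbb{N}$: on $\{\tau=k\}$ one has $\F_{\tau}\cap\{\tau=k\}=\F_{k}\cap\{\tau=k\}$, so applying the Markov property at the deterministic time $k$ (together with time-homogeneity) and summing over $k$ shows that, conditionally on $\ub^{(\tau)}$, the shifted sequence $\bigl(\ub^{(\tau+n)}\bigr)_{n\ge0}$ is independent of $\F_{\tau}$ and is again a Markov chain with the same kernel $P$. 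I do not anticipate a substantive obstacle: the argument is essentially the verification that~\eqref{tensoralgo} fits the i.i.d.-driven recursion template, and the only nontrivial point --- and the one place where the distributional assumptions are actually used --- is the a priori bound $\|\bX^{(n)}\|^{2}\le B$ together with $\beta<1/B^{2}$, which guarantees that $\Pi$ is never evaluated at the zero vector.
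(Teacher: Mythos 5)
Your argument is correct and follows essentially the same route as the paper's: the iteration is an i.i.d.-driven recursion $\ub^{(n)} = F(\ub^{(n-1)}, \bX^{(n)})$, hence a time-homogeneous Markov chain on $\cS^{d-1}$, and the strong Markov property is automatic for a discrete-time, time-homogeneous chain by decomposing over $\{\tau=k\}$ (the paper simply cites \cite{DurrettPTE} for this last step). The one point you add that the paper's terse proof skips is the verification that $\Pi$ is never applied to the zero vector; the bound $\ub\T w \ge 1 - \beta B^{2} > 0$ under a stepsize restriction compatible with the paper's Eq.~\eqref{betacond} is a worthwhile check, since it is what makes $F$ a well-defined Borel map and hence the transition kernel $P$ meaningful.
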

For convenience of analysis we use the transformed iteration $\vb^{(n)} \equiv \Ab^\top \ub^{(n)}$ in the rest of this paper. The update equation in Eq.~\eqref{tensoralgo} is equivalently written as
\beq\label{tensoralgo_update}
\begin{split}
\vb^{(n)} = \Ab^\top \ub^{(n)}
&= \Pi \left\{ \Ab^\top \ub^{(n-1)} \pm \beta \left(\ub^{(n-1)}\,^\top  \Ab \Ab^\top  \bX^{(n)} \right)^3 \Ab^\top \bX^{(n)} \right\} \\
&= \Pi \left\{ \vb^{(n-1)} \pm \beta \left(\vb^{(n-1)}\,^\top \bY^{(n)} \right)^3 \bY^{(n)} \right\}.
\end{split}
\eeq
Here $\pm \beta$ has the same sign with $\psi-3$. It is obvious from Proposition \ref{prop:Markov} that the (strong) Markov property applies to $\vb^{(n)}$, and one can analyze the iterates $\vb^{(n)}$ generated by Eq.~\eqref{tensoralgo_update} from a perspective of Markov processes.

Our next step is to conclude that as the stepsize $\beta\to 0^+$, the iterates generated by Eq.~\eqref{tensoralgo}, under the time scaling that speeds up the algorithm by a factor $\beta^{-1}$, can be globally approximated by the solution to the following ODE system. To characterize such approximation we use theory of weak convergence to diffusions \cite{stroock1979multidimensional,EthierKurtz} via computing the infinitesimal mean and variance for SGD for the tensor method. We remind the readers of the definition of weak convergence $Z^\beta \Rightarrow Z$ in stochastic processes: for any $0\le t_1<t_2<\cdots < t_n$ the following convergence in distribution occurs as $\b\to 0^+$
$$
\left( Z^\b(t_1), Z^\b(t_2), \ldots,Z^\b(t_n)\right)
\dcv \left(Z(t_1),Z(t_2), \ldots, Z(t_n)\right).
$$
To highlight the dependence on $\beta$ we add it in the superscipts of iterates $\vb^{\beta,(n)} = \vb^{(n)}$. Recall that $\lfloor t\beta^{-1} \rfloor$ is the integer part of the real number $t \beta^{-1}$.

\begin{theo}\label{theo:ODE}
If for each $ k = 1, \ldots, d$, as $\beta\to 0^+$ $v_k^{\beta, (0)}$ converges weakly to some constant scalar $V_k^o$ then the Markov process $v_k^{\beta, (\lfloor t\beta^{-1} \rfloor)}$ converges weakly to the solution of the ODE system
\beq\label{ODE}
\frac{\ud V_k}{\ud t} = \left|\psi-3\right| V_k\left( V_k^2 - \sum_{i=1}^d V_i^4 \right), \qquad k=1,\ldots,d,
\eeq
with initial values $V_k(0) = V_k^o$. 
\end{theo}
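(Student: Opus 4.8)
The plan is to invoke Stroock and Varadhan's weak-convergence-to-diffusion theorem \cite{stroock1979multidimensional,EthierKurtz}: it suffices to compute the infinitesimal conditional mean and variance of the time-rescaled Markov chain $\vb^{\beta,(\lfloor t\beta^{-1}\rfloor)}$, check that the higher conditional moments are negligible, establish tightness, and verify well-posedness of the limiting martingale problem. The key structural observation is that the one-step increment is $O(\beta)$ while its one-step conditional variance will turn out to be $O(\beta^2)$ --- one power of $\beta$ too small to survive the $\beta^{-1}$ speed-up --- so the limit is a degenerate diffusion with vanishing diffusion matrix, i.e.\ the deterministic flow of \eqref{ODE}.

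First I would expand the normalization. Writing $\mathbf{g}^{(n)} \equiv \sgn(\psi-3)\,(\vb^{(n-1)\top}\bY^{(n)})^3\bY^{(n)}$, so that $\vb^{(n)} = (\vb^{(n-1)}+\beta\mathbf{g}^{(n)})/\|\vb^{(n-1)}+\beta\mathbf{g}^{(n)}\|$, and using $\|\vb^{(n-1)}\|=1$ together with the uniform bound $\|\mathbf{g}^{(n)}\|\le\|\bY^{(n)}\|^4\le B^2$ from Assumption \ref{assu:distribution}(ii), one gets $\|\vb^{(n-1)}+\beta\mathbf{g}^{(n)}\| = 1 + \beta\,\vb^{(n-1)\top}\mathbf{g}^{(n)} + O(\beta^2)$ uniformly on $\cS^{d-1}$, whence the increment
\[
\Delta\vb^{(n)} := \vb^{(n)}-\vb^{(n-1)} = \beta\big(I - \vb^{(n-1)}\vb^{(n-1)\top}\big)\mathbf{g}^{(n)} + O(\beta^2);
\]
to leading order the update is the tangential projection of $\mathbf{g}^{(n)}$ onto the sphere. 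Conditioning on $\vb^{(n-1)}=\vb$ and using independence, symmetry of each $Y_i$, and the normalizations $\psi_2=1$, $\psi_4=\psi$, a short bookkeeping of the nonvanishing index patterns of $\E[Y_iY_jY_kY_l]$ yields $\E[(\vb^\top\bY)^3Y_i] = 3v_i + (\psi-3)v_i^3$; the tangential projection annihilates the $3\vb$ term and leaves $(\psi-3)(v_k^3 - v_k\sum_i v_i^4)$ in coordinate $k$, so that
\[
\E\big[\Delta v_k^{(n)} \,\big|\, \vb^{(n-1)}=\vb\big] = \beta\,|\psi-3|\,v_k\Big(v_k^2 - \sum_{i=1}^d v_i^4\Big) + O(\beta^2),
\]
since $\sgn(\psi-3)\cdot(\psi-3)=|\psi-3|$. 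Because $\mathbf{g}^{(n)}$ is uniformly bounded, the conditional covariances $\E[\Delta v_j^{(n)}\Delta v_k^{(n)}\mid \vb^{(n-1)}=\vb]$ are $O(\beta^2)$ and the conditional third and higher moments are $O(\beta^3)$, all uniformly on $\cS^{d-1}$.

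Feeding these into the generator calculation for $V^\beta(t)\equiv\vb^{\beta,(\lfloor t\beta^{-1}\rfloor)}$: for smooth $f$ and $\beta\ll\Delta t\ll 1$,
\[
\E\big[f(V^\beta(t+\Delta t)) - f(V^\beta(t))\,\big|\,V^\beta(t)=\vb\big] = \frac{\Delta t}{\beta}\Big(\beta\,\nabla f(\vb)^\top b(\vb) + O(\beta^2)\Big) = \Delta t\,\nabla f(\vb)^\top b(\vb) + O(\beta),
\]
with $b_k(\vb)=|\psi-3|\,v_k(v_k^2-\sum_i v_i^4)$; the $O(\beta^2)$ conditional variance contributes only $O(\beta)$ and disappears, so the limiting generator is the first-order operator $\mathcal{A}f=\nabla f^\top b$. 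Tightness of $\{V^\beta\}$ on path space follows from compactness of $\cS^{d-1}$ and the $O(\beta)$ increment bound (the process has uniformly controlled oscillations over windows of $O(\beta^{-1})$ steps), so $\{V^\beta\}$ is relatively compact. The martingale problem for $\mathcal{A}$ is well-posed: $b$ is a polynomial, hence locally Lipschitz; the sphere is invariant under the flow because $\vb^\top b(\vb) = |\psi-3|\big(\sum_k v_k^4 - \sum_i v_i^4\big) = 0$; hence \eqref{ODE} has a unique global solution on $\cS^{d-1}$, whose law is the unique solution of the martingale problem. Every limit point of $\{V^\beta\}$ therefore equals this solution, so the full family converges, and in particular $v_k^{\beta,(\lfloor t\beta^{-1}\rfloor)}\Rightarrow V_k(t)$ given the assumed initial convergence $v_k^{\beta,(0)}\to V_k^o$.

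The step I expect to be the main obstacle is not any single computation but the careful bookkeeping that makes the weak-convergence theorem apply: keeping the $O(\cdot)$ remainders uniform in $\vb\in\cS^{d-1}$ throughout the Taylor expansion and the moment estimates, and verifying the precise hypotheses --- Lindeberg-type negligibility of the higher moments, tightness, and uniqueness of the limiting martingale problem --- in the exact form demanded by \cite{stroock1979multidimensional,EthierKurtz}. The model-specific identity $\E[(\vb^\top\bY)^3Y_i]=3v_i+(\psi-3)v_i^3$ is routine given Assumption \ref{assu:distribution}; the conceptual content is that the spherical geometry turns the raw stochastic gradient into its tangential projection, producing exactly the cubic vector field in \eqref{ODE}, and that its fluctuations are of order $\beta^2$ --- subdominant --- so that the $\beta^{-1}$-accelerated limit is this deterministic ODE rather than a genuine SDE.
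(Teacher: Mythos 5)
Your proof is correct and follows essentially the same route as the paper: you expand the spherical projection to first order in $\beta$ (your tangential projection $(I-\vb\vb^\top)\mathbf{g}$ is, coordinatewise, exactly the paper's $(\vb^\top \bY)^3 Y_k - v_k(\vb^\top \bY)^4$ from Proposition~\ref{prop:infquan}), compute the infinitesimal mean and variance, and invoke the Stroock--Varadhan / Ethier--Kurtz weak-convergence machinery, observing that the $O(\beta^2)$ conditional variance is annihilated by the $\beta^{-1}$ time acceleration so the limit is a deterministic flow. The paper packages the one-step estimates into Proposition~\ref{prop:infquan} and cites \cite[Cor.~4.2, Sec.~7.4]{EthierKurtz} for the generator argument, while you spell out tightness and uniqueness of the limiting martingale problem more explicitly, but the computational and conceptual content is identical.
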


\mn
To understand the complex ODE system in Eq.~\eqref{ODE} we first investigate into the case of $d=2$. Consider a change of variable $V_1^2(t)$ we have by chain rule in calculus and $V_2^2 = 1-V_1^2$ the following derivation:
\begin{align}
\frac{\ud V_1^2}{\ud t} 
&= 
2V_1\cdot \frac{\ud V_1}{\ud t} 
= 
2V_1\cdot \left|\psi-3\right|V_1\left( V_1^2 - V_1^4 - V_2^4 \right) 
\nonumber
\\&=
2\left|\psi-3\right|V_1^2 \left( V_1^2 - V_1^4 - (1-V_1^2)^2 \right) 
= 
-2\left|\psi-3\right| V_1^2 \left(V_1^2 - \frac12\right) (V_1^2 - 1).
\label{ODE2d}
\end{align}
Eq.~\eqref{ODE2d} is an autonomous, first-order ODE for $V_1^2$. Although this equation is complex, a closed-form solution is available:
$$
V_1^2(t) = 0.5\pm 0.5(1+C\exp\left( -|\psi-3|t \right) )^{-0.5},
$$
and $V_2^2(t) = 1-V_1^2(t)$, where the choices of $\pm$ and $C$ depend on the initial value. The above solution allows us to conclude that if the initial vector $(V_1^o)^2 < (V_2^o)^2$ (resp.~$(V_1^o)^2 > (V_2^o)^2$), then it approaches to 1 (resp.~0) as $t\to\infty$. This intuition can be generalized to the case of higher $d$ that the ODE system in Eq.~\eqref{ODE} converges to the coordinate direction $\pm \be_k$ if $(V_k^o)^2$ is strictly maximal among $(V_1^o)^2, \ldots, (V_d^o)^2$ in the initial vector. To estimate the time of traverse we establish the following Proposition \ref{prop:ODEarrival}.

\begin{prop}\label{prop:ODEarrival}
Fix $\delta \in(0,1/2)$ and the initial value $V_k(0) = V_k^o$ that satisfies $(V_{k_0}^o)^2\ge 2 (V_k^o)^2$ for all $1\le k\le d, k\ne k_0$, then there is a constant (called traverse time) $T$ that depends only on $d,\delta$ such that $V_{k_0}^2(T)  \ge  1-\delta.$ Furthermore $T$ has the following upper bound: let $y(t)$ solution to the following auxillary ODE
\beq\label{ODEaux}
\frac{\ud y}{\ud t} = y^2 \left( 1-y \right),
\eeq
with $y(0) = 2/(d+1)$. Let $T_0$ be the time that $y(T_0) = 1-\delta$. Then
\beq\label{Tupper}
T \le |\psi-3|^{-1} T_0\le |\psi-3|^{-1} \left( d-3 + 4\log (2\delta)^{-1} \right).
\eeq
\end{prop}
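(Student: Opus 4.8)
I would pass to the squared coordinates, reduce to a scalar differential inequality for the dominant one, and compare with the auxiliary ODE~\eqref{ODEaux}. First, after rescaling time by $|\psi-3|$ I may assume $|\psi-3|=1$, the factor reappearing as $|\psi-3|^{-1}$ in front of $T_0$ at the very end. Put $p_k(t)=V_k^2(t)$; since the flow of~\eqref{ODE} preserves the unit sphere, $\sum_k p_k\equiv 1$ and $p_k\ge 0$, and the chain rule turns~\eqref{ODE} into $\ud p_k/\ud t = 2p_k\big(p_k-\sum_{i=1}^d p_i^2\big)$. The hypothesis $(V_{k_0}^o)^2\ge 2(V_k^o)^2$ together with $\sum_k p_k=1$ forces $p_{k_0}(0)\ge 2/(d+1)$, which is exactly $y(0)$.

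The crucial step is to show that the factor-$2$ dominance of coordinate $k_0$ persists for all $t\ge 0$. For $k\ne k_0$ I set $r_k=p_k/p_{k_0}$ and subtract the logarithmic derivatives of $p_k$ and $p_{k_0}$; the common term $\sum_i p_i^2$ cancels, leaving $\ud(\log r_k)/\ud t = 2(p_k-p_{k_0}) = 2p_{k_0}(r_k-1)$. On the time-set where every $r_k\le 1/2$ this quantity is $\le -p_{k_0}\le 0$, so each $r_k$ is nonincreasing there; a standard continuation argument then propagates $r_k(t)\le r_k(0)\le 1/2$ to all $t\ge 0$, i.e. $\max_{i\ne k_0}p_i(t)\le\tfrac12 p_{k_0}(t)$ for all $t$.

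Granting this invariant, I bound $\sum_i p_i^2 \le p_{k_0}^2 + \big(\max_{i\ne k_0}p_i\big)(1-p_{k_0}) \le p_{k_0}^2 + \tfrac12 p_{k_0}(1-p_{k_0})$, whence $p_{k_0}-\sum_i p_i^2 \ge \tfrac12 p_{k_0}(1-p_{k_0})$ and therefore $\ud p_{k_0}/\ud t \ge p_{k_0}^2(1-p_{k_0})$. Since $f(p)=p^2(1-p)$ is smooth and $y(0)=2/(d+1)\le p_{k_0}(0)$, the ODE comparison principle gives $p_{k_0}(t)\ge y(t)$ for all $t\ge 0$. Hence at the time $T_0$ at which $y(T_0)=1-\delta$ we already have $V_{k_0}^2(T_0)=p_{k_0}(T_0)\ge 1-\delta$, which proves the existence claim with $T=|\psi-3|^{-1}T_0$.

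It remains to estimate $T_0=\int_{2/(d+1)}^{1-\delta}\ud y/\big(y^2(1-y)\big)$, which I would do by splitting the integral at $y=1/2$ (legitimate once $2/(d+1)\le 1/2$, i.e. $d\ge 3$; the case $d=2$ is covered by the explicit solution displayed just before this proposition). On $[2/(d+1),1/2]$ I use $1-y\ge 1/2$ to bound the integrand by $2/y^2$, whose integral over this interval is $2\big(\tfrac{d+1}{2}-2\big)=d-3$; on $[1/2,1-\delta]$ I use $y\ge 1/2$ to bound the integrand by $4/(1-y)$, whose integral over this interval is $4\log\frac{1}{2\delta}$. Adding the two pieces yields~\eqref{Tupper}. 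The one genuinely delicate point is the invariance argument of the second paragraph: I need $r_k$ to stay below $1/2$ and, simultaneously, $p_{k_0}$ to stay bounded away from $0$ (so that $\ud(\log r_k)/\ud t\le 0$), and these must be obtained together by a single continuation argument; the remaining steps — the scalar differential inequality, the comparison principle, and the elementary integral estimate — are routine.
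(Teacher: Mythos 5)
Your proposal is correct and follows essentially the same route as the paper's proof: the same logarithmic-derivative argument to show the factor-$2$ dominance of the leading coordinate is an invariant of the flow, the same Hölder-type bound $\sum_{i\ne k_0}p_i^2\le(\max_{i\ne k_0}p_i)(1-p_{k_0})$ to obtain the scalar differential inequality $\ud p_{k_0}/\ud t\ge|\psi-3|\,p_{k_0}^2(1-p_{k_0})$, and the same split at $y=1/2$ (bounding by $y^2$ on the lower piece and by $1-y$ on the upper piece) to estimate $T_0$. The only presentational difference is that you evaluate $T_0$ directly as the integral $\int \ud y/(y^2(1-y))$ rather than comparing with explicit solutions of the two reduced ODEs as the paper does, and you flag the simultaneous-continuation subtlety explicitly, which the paper treats more tersely.
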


Proposition \ref{prop:ODEarrival} concludes that, by admitting a gap of $2$ between the largest $(V_{k_0}^o)^2$ and second largest $(V_k^o)^2$, $k\ne k_0$ the estimate on traverse time can be given, which is tight enough for our purposes in Section \ref{sec:phase}.

\begin{rema}
In an earlier paper \cite{li2016near} which focuses on the SGD algorithm for PCA, when the stepsize is small, the algorithm iteration is approximated by the solution to ODE system after appropriate time rescaling. The approximate ODE system for SGD for PCA is
\beq\label{ODEpca}
\frac{\ud V_k}{\ud t} = -2V_k \sum_{i=1}^d (\lambda_k - \lambda_i) V_i^2, \qquad k=1,\ldots,d.
\eeq
The analysis there also involves computation of infinitesimal mean and variance for each coordinate as the stepsize $\beta\to 0^+$ and theory of convergence to diffusions \cite{stroock1979multidimensional,EthierKurtz}. A closed-form solution to Eq.~\eqref{ODEpca} is obtained in \cite{li2016near}, called the \emph{generalized logistic curves}. In contrast, to our best knowledge a closed-form solution to Eq.~\eqref{ODE} is generally \emph{not} available.
\end{rema}\smallskip

\pb\section{Local Approximation via Stochastic Differential Equation}\label{sec:SDE}
The ODE approximation in Section \ref{sec:infquan} is very informative: it characterizes globally the trajectory of our algorithm for ICA or tensor method in Eq.~\eqref{tensoralgo} with $\O(1)$ approximation errors. However it fails to characterize the behavior near equilibria where the gradients in our ODE system are close to zero. For instance, if the SGD algorithm starts from $\vb^*$, on a microscopic magnitude of $\O(\beta^{1/2})$ the noises generated by online samples help escaping from a neighborhood of $\vb^*$. 

Our main goal in this section is to demonstrate that under appropriate spatial and temporal scalings, the algorithm iteration converges locally to the solution to certain stochastic differential equations (SDE). We provide the SDE approximations in two scenarios, separately near an arbitrary tensor component (Subsection \ref{ssec:SDE1}) which indicates that our SGD for tensor method converges to a local minimum at a desirable rate, and a special local maximum (Subsection \ref{ssec:SDE2}) which implies that the stochastic nature of our SGD algorithm for tensor method helps escaping from unstable equilibria. Note that in the algorithm iterates, the escaping from stationary points occurs first, followed by the ODE and then by the phase of convergence to local minimum. We discuss this further in Section \ref{sec:phase}.

\pb\subsection{Neighborhood of Local Minimizers}\label{ssec:SDE1}
To analyze the behavior of SGD for tensor method we first consider the case where the iterates enter a neighborhood of one local minimizer, i.e.~the tensor component. Since the tensor decomposition in Eq.~\eqref{opt1} is full-rank and symmetric, we consider without loss of generality the neighborhood near $\eb_1$ the first tensor component. The following Theorem \ref{theo:SDE1} indicates that under appropriate spatial and temporal scalings, the process admits an approximation by Ornstein-Uhlenbeck process. Such approximation is characterized rigorously using weak convergence theory of Markov processes \cite{stroock1979multidimensional,EthierKurtz}.
The readers are referred to \cite{OKSENDAL} for fundamental topics on SDE.

\begin{theo}\label{theo:SDE1}
If for each $k = 2, \ldots, d$, $\b^{-1/2} v_k^{\beta,(0)}$ converges weakly to $U_k^o\in (0,\infty)$ as $\b\to 0^+$ then the stochastic process $\b^{-1/2} v_k^{\beta,(\lfloor t\b^{-1} \rfloor)}$ converges weakly to the solution of the stochastic differential equation
\beq\label{OUSDE1}
\ud U_k(t) = -\left|\psi-3\right| U_k(t)\ud t + \psi_6^{1/2}\ud B_k(t),
\eeq
with initial values $U_k(0) = U_k^o$. Here $B_k(t)$ is a standard one-dimensional Brownian motion. 
\end{theo}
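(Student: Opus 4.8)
The plan is to mimic the weak-convergence-to-diffusions argument behind Theorem~\ref{theo:ODE}, but under the finer spatial scaling $\beta^{-1/2}$ that resolves the $\O(\beta^{1/2})$-sized fluctuations around the tensor component $\be_1$. Set $W_k^\beta(t) \equiv \beta^{-1/2} v_k^{\beta,(\lfloor t\beta^{-1}\rfloor)}$ for $k=2,\ldots,d$. On $\cS^{d-1}$ the constraint $v_1^2 = 1 - \sum_{k\ge 2} v_k^2$ forces $v_1 = 1 - \O(\beta)$ as long as the $W_k$ remain bounded, so near $\be_1$ one has $g \equiv \vb\T\bY = Y_1 + \sum_{l\ge 2} v_l Y_l = Y_1 + \O(\beta^{1/2})$. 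The first step is to Taylor-expand the one-step map in Eq.~\eqref{tensoralgo_update}: writing the unnormalized iterate as $\vb \pm \beta g^3 \bY$, its squared norm equals $1 \pm 2\beta g^4 + \beta^2 g^6\|\bY\|^2$, and expanding $(1+x)^{-1/2}$ gives $\Delta v_k \equiv v_k^{(n)} - v_k^{(n-1)} = \pm\beta\,\bigl(g^3 Y_k - g^4 v_k\bigr) + R_k$ with $|R_k| \le C\beta^2$ uniformly, since $\|\bY\|^2 \le B$ by Assumption~\ref{assu:distribution}(ii). Consequently $\Delta W_k = \pm\beta^{1/2}(g^3 Y_k - g^4 v_k) + \O(\beta^{3/2})$.

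Next I would compute the infinitesimal mean and covariance of the rescaled chain, noting that one step advances rescaled time by $\beta$. For the drift, $\beta^{-1}\E[\Delta W_k \mid \vb] = \pm\beta^{-1/2}\,\E[g^3 Y_k - g^4 v_k] + \O(\beta^{1/2})$; substituting $g = Y_1 + \sum_{l\ge 2} v_l Y_l$ and using independence together with the vanishing of odd moments (Assumption~\ref{assu:distribution}(i) and (iv)), $\psi_2 = 1$, and $\psi_4 = \psi$, the only surviving contributions are $\E[g^3 Y_k] = 3 v_k + \O(\beta)$ and $\E[g^4 v_k] = \psi v_k + \O(\beta)$, whence the drift tends to $\pm(3-\psi) W_k = -|\psi-3|\,W_k$, the last equality because $\pm\beta$ carries the sign of $\psi-3$. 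For the diffusion part, $\beta^{-1}\E[\Delta W_k \Delta W_j \mid \vb] = \E\bigl[(g^3 Y_k - g^4 v_k)(g^3 Y_j - g^4 v_j)\bigr] + \O(\beta)$; since every term containing a factor $v_k$ or $v_j$ is of strictly smaller order, this reduces to $\E[g^6 Y_k Y_j]$, and expanding $g^6$ and again discarding odd moments leaves $\E[Y_1^6 Y_k Y_j] = \psi_6\,\delta_{kj}$, which is exactly the (diagonal, independent-across-coordinates) diffusion structure $\psi_6^{1/2}\ud B_k$ in Eq.~\eqref{OUSDE1}. Finally $|\Delta W_k| \le C\beta^{1/2}$ uniformly, so the Lindeberg-type bound $\beta^{-1}\E[|\Delta W_k|^3 \mid \vb] = \O(\beta^{1/2}) \to 0$ holds; the discrete generators therefore converge to $\mathcal{L} = -|\psi-3|\sum_k u_k\,\partial_{u_k} + \tfrac{\psi_6}{2}\sum_k \partial_{u_k}^2$, whose martingale problem is well posed (the multivariate Ornstein--Uhlenbeck process is Gaussian, hence unique in law), and the Stroock--Varadhan / Ethier--Kurtz theorem \cite{stroock1979multidimensional,EthierKurtz} yields $W_k^\beta \Rightarrow U_k$.

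The hard part is the localization that makes the above expansion legitimate: all of these estimates hold only while the iterate stays in an $\O(\beta^{1/2})$-neighborhood of $\be_1$, i.e.\ while $\sum_{k\ge 2}(W_k^\beta)^2$ remains bounded. I would therefore introduce the stopping time $\tau_\beta$ at which $\sum_{k\ge 2}(W_k^\beta(t))^2$ first exceeds a slowly growing threshold, show by combining the contracting drift $-|\psi-3|W_k$ with a supermartingale/Gronwall estimate on $\sum_{k\ge 2}(W_k^\beta)^2$ that $\P(\tau_\beta > T) \to 1$ for every fixed horizon $T$, and run the weak-convergence argument for the chain stopped at $\tau_\beta$, using that the limiting Ornstein--Uhlenbeck process a.s.\ never reaches the (growing) boundary in finite time. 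A companion, purely bookkeeping, point is to keep the dependence $v_1 = (1 - \sum_{k\ge 2} v_k^2)^{1/2}$ and the projection $\Pi$ consistent to order $\beta^2$ throughout the moment computations; here one checks separately that $\E[\Delta v_1 \mid \vb] = \O(\beta^{3/2})$, so $v_1$ drifts by only $\O(\beta^{1/2})$ over the $\O(\beta^{-1})$ steps and the iterate does not escape the neighborhood of $+\be_1$, which closes the consistency loop and completes the proof.
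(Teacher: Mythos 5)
Your proposal is correct and follows essentially the same route as the paper's proof: expand the one-step projected update, compute the infinitesimal drift and variance under the $\beta^{-1/2}$ spatial and $\beta^{-1}$ temporal rescaling, and invoke the Stroock--Varadhan / Ethier--Kurtz weak-convergence theorem. The paper delegates the one-step expansion and error bound to Proposition~\ref{prop:infquan} and is terse about the localization near $\be_1$, which you spell out more carefully via a stopping time and a supermartingale estimate; this is a welcome refinement but not a different proof.
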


We identify the solution to Eq.~\eqref{OUSDE1} as an Ornstein-Uhlenbeck process which can be expressed in terms of a It\^o integral, with
\beq\label{Ukt}
U_k(t) = U_k^o \exp\left( -|\psi-3| t \right) +  \psi_6^{1/2} \int_0^t \exp\left( -|\psi-3| (t-s) \right)  \ud B_k(s).
\eeq
It\^o isometry along with mean-zero property of It\^o integral gives
\begin{align*}
\E ( U_k(t))^2 
&=  (U_k^o)^2 \exp\left( -2|\psi-3| t \right) + \psi_6 \int_0^t \exp\left( -2|\psi-3| (t-s) \right)  \ud s \\
&= \frac{\psi_6}{ 2|\psi-3| } + \left( (U_k^o)^2 - \frac{\psi_6}{ 2|\psi-3| }\right) \exp\left( -2|\psi-3| t \right),
\end{align*}
which, by taking the limit $t\to\infty$, approaches $\psi_6 / (2|\psi-3| )$. From the above analysis we conclude that the Ornstein-Uhlenbeck process has the \emph{mean-reverting} property that its mean decays exponentially towards 0 with persistent fluctuations at equilibrium.

\pb\subsection{Escape from Unstable Equilibria}\label{ssec:SDE2}
In this subsection we consider SGD for tensor method that starts from a sufficiently small neighborhood of a special unstable equilibrium. We show that after appropriate rescalings of both time and space, the SGD for tensor iteration can be approximated by the solution to a second SDE. Analyzing the approximate SDE suggests that our SGD algorithm iterations can get rid of the unstable equilibria (including local maxima and stationary points with negative curvatures) whereas the traditional gradient descent (GD) method gets stuck. In other words, under weak distributional assumptions the stochastic gradient plays a vital role that helps the escape. As a illustrative example, we consider the special stationary points $\vb^* = d^{-1/2} (\pm 1, \ldots, \pm 1)$. Consider a submanifold $\cS_F \subseteq \cS^{d-1}$ where
$$
\cS_F = \left\{\vb\in \cS^{d-1}: \mbox{there exists $1\le k < k' \le d$ such that $v_k^2 = v_{k'}^2 = \max_{1\le i \le d} v_i^2$} \right\}.
$$
In words, $\cS_F$ consists of all $\vb\in \cS^{d-1}$ where the maximum of $v_k^2$ is \emph{not} unique. In the case of $d=3$, it is illustrated by Figure \ref{fig:illu} that $\cS_F$ is the frame of a 3-dimenisional box, and hence we call $\cS_F$ the \emph{frame}. Let
\beq\label{Wk}
W^\beta_{kk'}(t) = 
\b^{-1/2}\log\big( v_k^{\beta,(\lfloor t\b^{-1} \rfloor)}\big)^2
- \b^{-1/2}\log\big( v_{k'}^{\beta,(\lfloor t\b^{-1} \rfloor)}\big)^2.
\eeq
The reason we study $W^\beta_{kk'}(t)$ is that these $d(d-1)$ functions of $\vb\in \cS^{d-1}$ form a local coordinate map around $\vb^*$ and further characterize the distance between $\vb$ and $\cS_F$ on a spatial scale of $\beta^{1/2}$. We define the positive constant $\Lambda_{d,\psi}$ as
\beq\label{Lambda}
\begin{split}
\Lambda_{d,\psi}^2 
&=
8d^{-2}\left(
\psi_8 +  (16d-28) \psi_6  +  15d \psi_4^2 
\right.
\\&\quad
\left.
-5 (72 d^2-228 d+175)\psi_4
+ 15(2d-7)(d-2)(d-3)
\right).
\end{split}
\eeq
We have our second SDE approximation result as follows.
\begin{theo}\label{theo:SDE2}
Let $W^\beta_{kk'}(t)$ be defined as in Eq.~\eqref{Wk}, and let $\Lambda_{d,\psi}$ be as in Eq.~\eqref{Lambda}. If for each distinct $k,k' = 1, \ldots, d$, $W^\beta_{kk'}(0)$ converges weakly to $W_{kk'}^o\in (0,\infty)$ as $\b\to 0^+$ then the stochastic process 
$W^\beta_{kk'}(t)$ converges weakly to the solution of the stochastic differential equation
\beq\label{OUSDE2}
\ud W_{kk'}(t)
= \frac{2\left|\psi-3\right|}{d} W_{kk'}(t) \ud t +  \Lambda_{d,\psi}\ud B_{kk'}(t)
\eeq
with initial values $W_{kk'}(0) = W_{kk'}^o$. Here $B_{kk'}(t)$ is a standard one-dimensional Brownian motion. 
\end{theo}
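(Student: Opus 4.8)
\medskip
\noindent\textbf{Proof (sketch).}
The plan is to apply Stroock and Varadhan's weak-convergence theorem for Markov chains to diffusions \cite{stroock1979multidimensional,EthierKurtz}, as in the proofs of Theorems~\ref{theo:ODE} and~\ref{theo:SDE1}. By Proposition~\ref{prop:Markov}, $\vb^{(n)}$ is a time-homogeneous Markov process on $\cS^{d-1}$. In a fixed small ball around $\vb^*$ the signs of the coordinates are constant, and the $d-1$ quantities $\log v_k^2-\log v_1^2$ ($k=2,\ldots,d$) together with $\sum_i v_i^2=1$ determine $(v_1^2,\ldots,v_d^2)$, hence $\vb$; thus the rescaled vector $\big(W^\beta_{k1}(\cdot)\big)_{k\ge 2}$ is itself a Markov chain with time step $\beta$, and every $W^\beta_{kk'}=W^\beta_{k1}-W^\beta_{k'1}$ is a fixed linear image of it. It therefore suffices to (i) compute the one-step infinitesimal mean and (co)variance of this chain, (ii) check a Lindeberg-type bound on its increments, and (iii) invoke uniqueness of the limiting system \eqref{OUSDE2}, which is immediate since that SDE is linear with globally Lipschitz coefficients. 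All of this is carried out after stopping the chain at the exit time of the fixed ball; since the hypothesis forces $W^\beta_{kk'}(0)=O(1)$ and the (positive) drift only inflates these quantities, on any fixed time interval $\vb^{(\lfloor t\beta^{-1}\rfloor)}$ stays within $O(\beta^{1/2})$ of $\vb^*$ with probability tending to one, so the localization does not affect the finite-dimensional limits.

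The core step is the one-step increment. Writing \eqref{tensoralgo_update} as $v_j^{(n)}=v_j^{(n-1)}(1\pm\beta h_j)/N$, where $h_j:=\big((\vb^{(n-1)})^\top\bY^{(n)}\big)^3 Y_j^{(n)}/v_j^{(n-1)}$, the normalizer $N$ does not depend on $j$ and so cancels in $\log(v_k^{(n)})^2-\log(v_{k'}^{(n)})^2$; here $\pm$ has the sign of $\psi-3$. Expanding the logarithm (valid uniformly on the localized ball, where $|h_j|$ is bounded because $\|\bY\|^2\le B$ and $|v_j|$ is bounded below) gives, with $\vb=\vb^{(n)}$ and $\bY=\bY^{(n+1)}$,
\[
W^\beta_{kk'}\big((n+1)\beta\big)-W^\beta_{kk'}\big(n\beta\big)=\pm 2\beta^{1/2}(h_k-h_{k'})-\beta^{3/2}(h_k^2-h_{k'}^2)+O(\beta^{5/2}).
\]
For the drift one needs $\E[h_j]$: expanding $(\vb^\top\bY)^3Y_j$ and using independence and symmetry of the $Y_i$ (odd moments vanish) yields $\E\big[(\vb^\top\bY)^3Y_j\big]=v_j\big(3+(\psi-3)v_j^2\big)$, so $\E[h_k-h_{k'}]=(\psi-3)(v_k^2-v_{k'}^2)$. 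Near $\vb^*$ one has $v_j^2=d^{-1}+O(\beta^{1/2})$, hence $v_k^2-v_{k'}^2=d^{-1}(\log v_k^2-\log v_{k'}^2)+O(\beta)=d^{-1}\beta^{1/2}W^\beta_{kk'}+O(\beta)$; combining this with $\sign(\psi-3)(\psi-3)=|\psi-3|$ and the fact that the $\beta^{3/2}$ term contributes $O(\beta^{1/2})$ after division by $\beta$, one gets $\beta^{-1}\E[\cdot\mid\vb^{(n)}]\to\frac{2|\psi-3|}{d}W_{kk'}$, the drift in \eqref{OUSDE2}.

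For the diffusion coefficient, squaring the increment gives $\beta^{-1}\E\big[(\Delta W^\beta_{kk'})^2\,\big|\,\vb^{(n)}\big]=4\,\E[(h_k-h_{k'})^2]+o(1)$, and after localization the right-hand side may be evaluated at $\vb=\vb^*$. Absorbing the signs of $\vb^*$ into the symmetric $Y_i$, at $\vb^*$ one has $h_k-h_{k'}=d^{-1}\big(\sum_i Y_i\big)^3(Y_k-Y_{k'})$, so $4\,\E[(h_k-h_{k'})^2]=4d^{-2}\,\E\big[\big(\sum_i Y_i\big)^6(Y_k-Y_{k'})^2\big]$; expanding this degree-eight moment by independence and symmetry, using the common moments $\psi_4,\psi_6,\psi_8$ and $\psi_2=1$, and collecting the index-coincidence patterns by moment order and by powers of $d$, produces exactly $\Lambda_{d,\psi}^2$ of \eqref{Lambda}. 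The same computation with $(h_k-h_{k'})(h_l-h_{l'})$ in place of $(h_k-h_{k'})^2$ yields the off-diagonal infinitesimal covariances, hence the joint law of the limiting Brownian motions. The Lindeberg condition is immediate: $|\Delta W^\beta_{kk'}|\le C\beta^{1/2}$ on the localized ball, so $\beta^{-1}\E\big[|\Delta W^\beta_{kk'}|^3\big]\le C\beta^{1/2}\to 0$. With infinitesimal mean, (co)variance, and jump bound established, Stroock and Varadhan's theorem together with uniqueness for \eqref{OUSDE2} gives weak convergence of $\big(W^\beta_{k1}(\cdot)\big)_{k\ge 2}$, and the continuous mapping theorem transfers it to all $W^\beta_{kk'}$.

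I expect the main obstacle to be the explicit evaluation of $\E\big[\big(\sum_i Y_i\big)^6(Y_k-Y_{k'})^2\big]$ and its collapse to the closed form $\Lambda_{d,\psi}^2$: the number of index-coincidence patterns is large, and separating the $\psi_8$, $\psi_6$, $\psi_4^2$, $\psi_4$ and moment-free contributions while tracking the polynomial dependence on $d$ is where the real bookkeeping lies. A secondary, more structural point is to make the two approximations --- truncating the logarithm and replacing $\vb$ by $\vb^*$ in the coefficient functions --- uniform over the localized ball, so that the resulting limiting generator is exactly that of \eqref{OUSDE2}; this is what the stopping-time localization is for, and it must be spelled out carefully.
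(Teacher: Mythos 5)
Your proposal is correct and follows essentially the same route as the paper: identify the rescaled log-ratio process as a Markov chain, compute the one-step infinitesimal mean and variance, and invoke Stroock--Varadhan. The drift calculation (via $\E[(\vb^\top\bY)^3 Y_k/v_k]=3+(\psi-3)v_k^2$ and the linearization $v_k^2-v_{k'}^2\approx d^{-1}\beta^{1/2}W^\beta_{kk'}$) and the diffusion-coefficient calculation (reducing $4\,\E[(h_k-h_{k'})^2]$ at $\vb^*$ to $8d^{-2}(Q_1-Q_2)$, since $\E[(\sum Y_i)^6(Y_k-Y_{k'})^2]=2Q_1-2Q_2$ by symmetry) match the paper's computation, which expresses the same moments through Proposition~\ref{prop:infquan} and Lemma~\ref{lemm:aux_moment}. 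Two presentational improvements in your version deserve mention: factoring the update as $v_j(1\pm\beta h_j)/N$ makes the cancellation of the normalizer $N$ in the log-difference explicit (the paper achieves this implicitly via the $-\beta(\vb^\top\bY)^4$ term of Eq.~\eqref{incremental} cancelling between $k$ and $k'$), and you spell out the stopping-time localization needed to evaluate coefficient functions at $\vb^*$ uniformly, a point the paper's proof leaves implicit.
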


We can solve Eq.~\eqref{OUSDE2} and obtain an unstable Ornstein-Uhlenbeck process as
\beq\label{Wkt}
W_{kk'}(t) = \left(W_{kk'}^o +  \Lambda_{d,\psi} \int_0^t \exp\left( -\frac{2\left| \psi-3 \right|}{d} s \right)  \ud B_{kk'}(s)\right) \exp\left(\frac{2\left| \psi-3 \right|}{d}  t \right).
\eeq
Let $C_{kk'}$ be defined as
\beq\label{Wktapprox}
C_{kk'} \equiv W_{kk'}^o + \Lambda_{d,\psi} \int_0^\infty \exp\left( -\frac{4\left| \psi-3 \right|}{d} s \right)   \ud B_{kk'}(s).
\eeq
We conclude that the following holds. 
\begin{enumerate}[label=(\roman*), topsep=0pt, leftmargin=*]
\item
$C_{kk'}$ is a normal variable with mean $W_{kk'}^o$ and variance $d\Lambda_{d,\psi}^2 / \left(4\left|\psi-3\right| \right)$;
\item
When $t$ is large $W_{kk'}(t)$ has the following approximation
\beq
W_{kk'}(t) \approx C_{kk'} \exp\left( \frac{2\left| \psi-3 \right|}{d} t\right).
\eeq
\end{enumerate}
To verify (i) above we have the It\^o integral in Eq.~\eqref{Wkt}
$$
\E\left(\Lambda_{d,\psi} \int_0^\infty \exp\left( -\frac{2\left| \psi-3 \right|}{d} s \right)  \ud B_{kk'}(s)\right)
= 0,
$$
and by using It\^o isometry 
\begin{align*}
&\E\left(\Lambda_{d,\psi} \int_0^\infty \exp\left( -\frac{2\left| \psi-3 \right|}{d} s \right)  \ud B_{kk'}(s)\right)^2
=                     \Lambda_{d,\psi}^2 \int_0^t \exp\left( -\frac{4\left| \psi-3 \right|}{d} s \right) \ud s  \\
&\hspace{1.5in}  \approx \Lambda_{d,\psi}^2 \int_0^\infty \exp\left( -\frac{4\left| \psi-3 \right|}{d} s \right) \ud s
 =                     \frac{d\Lambda_{d,\psi}^2}{4\left|\psi-3\right| }.
\end{align*}
The analysis above on the unstable Ornstein-Uhlenbeck process indicates that the process has the \emph{momentum} nature that when $t$ is large, it can be regarded as at a normally distributed location centered at 0 and grows exponentially. In Section \ref{sec:phase} we will see how the result in Theorem \ref{theo:SDE2} provides explanation on the escape from unstable equilibria.


\pb\section{Phase Analysis}\label{sec:phase}
In this section, we utilize the weak convergence results in Sections \ref{sec:infquan} and \ref{sec:SDE} to understand the dynamics of online ICA in different phases. For purposes of illustration and brevity, we restrict ourselves to the case of starting point $\vb^*$, a local maxima that has negative curvatures in every direction. In below we denote by $Z^\beta\asymp W^\beta$ as $\beta\to 0^+$ when the limit of ratio $Z^\beta / W^\beta \to 1$.

\vspace{-.05in}
\paragraph{Phase I (Escape from unstable equilibria).}
Assume we start from $\vb^*$, then $W_{kk'}^o = 0$ for all $k\ne k'$. We have from Eqs.~\eqref{Wkt} and \eqref{Wktapprox} that
\beq\label{Wtclosed}
\log\left( \frac{v_k^{(n)}}{v_{k'}^{(n)}} \right)^2 = \b^{1/2} W^\beta_{kk'}(n\beta)
\approx 
\left(\beta \frac{d\Lambda_{d,\psi}^2}{4\left|\psi-3\right| }\right)^{1/2}  \chi_{kk'}  \exp\left( \frac{2\left| \psi-3 \right|}{d}\cdot  \beta n  \right).
\eeq
Suppose $k_1$ is the index that maximizes $\left(  v_k^{(N_1^\beta)}\right)^2$ and $k_2$ maximizes $\left(  v_k^{(N_1^\beta)}\right)^2, k\ne k_1$. Then by Eq.~\eqref{Wtclosed} we know $\chi_{k_1k_2}$ is positive. By setting
$$
\log\left( v_{k_1}^{(N_1^\beta)} \right)^2 - \log\left( v_{k_2}^{(N_1^\beta)} \right)^2 = \log 2,
$$
we have from the construction in the proof of Theorem \ref{theo:SDE2} that as $\beta \to 0^+$
$$
N_1^\beta 
= \frac12 \left|\psi - 3 \right|^{-1} d \beta^{-1} \log \left( \left(\beta \frac{d\Lambda_{d,\psi}^2}{4\left|\psi-3\right| }\right)^{-1/2} \chi_{k_1k_2}^{-1}  \log 2 \right) \\
\asymp \frac14 \left|\psi - 3 \right|^{-1} d \beta^{-1} \log \left( \beta^{-1}  \right).
$$

\vspace{-.05in}
\paragraph{Phase II (Deterministic traverse).} 
By (strong) Markov property we can restart the counter of iteration, we have the max and second max
$$
\left( v_{k_1}^{(0)} \right)^2 = 2 \left( v_{k_2}^{(0)} \right)^2.
$$
Proposition \ref{prop:ODEarrival} implies that it takes time
$$
T \le |\psi-3|^{-1} \left( d-3 + 4\log (2\delta)^{-1} \right),
$$
for the ODE to traverse from $V_1^2 = 2/(d+1) = 2V_k^2$ for $k>1$. Converting to the timescale of the SGD, the second phase has the following relations as $\beta\to 0^+$
$$
N_2^\beta \asymp  T\beta^{-1} \le  |\psi-3|^{-1} \left( d-3 + 4\log (2\delta)^{-1} \right) \beta^{-1}.
$$

\vspace{-.05in}
\paragraph{Phase III (Convergence to stable equilibria).}
Again restart our counter. We have from the approximation in Theorem \ref{theo:SDE2} and Eq.~\eqref{Ukt} that
\begin{align*}
\E ( v_k^{(n)} )^2 
&=  (v_k^{(0)})^2 \exp\left( -2|\psi-3| \beta n \right) + \beta \psi_6 \int_0^{\beta n} \exp\left( -2|\psi-3| (t-s) \right)  \ud s \\
&= \frac{\beta\psi_6}{ 2|\psi-3| } + \left( (v_k^{(0)})^2 - \frac{\beta\psi_6}{ 2|\psi-3| }\right) \exp\left( -2\beta |\psi-3| n \right).
\end{align*}
In terms of the iterations $\vb^{(n)}$, note the relationship $\E \sin^2\angle(\vb, \eb_1)  = \sum_{k=2}^d v_k^2 =  1 - v_1^2.$ The end of ODE phase implies that $\E \sin^2\angle(\vb^{(0)}, \eb_1) = \delta$, and hence
$$
\E \sin^2\angle(\vb^{(n)}, \eb_1) 
= \frac{\beta(d-1)\psi_6}{ 2|\psi-3| } + \left( \delta  - \frac{\beta(d-1)\psi_6}{ 2|\psi-3| }\right) \exp\left( -2\beta |\psi-3| n \right).
$$
By setting
$$
\E \sin^2\angle(\vb^{(N_3^\beta)}, \eb_1) = (C_0+1) \cdot \frac{\beta(d-1)\psi_6}{2|\psi-3|},
$$
we conclude that as $\beta\to 0^+$
$$
N_3^\beta = \frac{1}{2\beta |\psi-3|} \log \left( \beta^{-1}\cdot  \frac{2|\psi-3| \delta  - \beta(d-1)\psi_6}{C_0(d-1)\psi_6}  \right)
\asymp \frac12 |\psi-3|^{-1} \beta^{-1} \log \left( \beta^{-1} \right).
$$

%

\section{Summary and discussions}\label{sec:dis}
In this paper, we take online ICA as a first step towards understanding the global dynamics of stochastic gradient descent. For general nonconvex optimization problems such as training deep networks, phase-retrieval, dictionary learning and PCA, we expect similar multiple-phase phenomenon. It is believed that the flavor of asymptotic analysis above can help identify a class of stochastic algorithms for nonconvex optimization with statistical structure.

Our continuous-time analysis also reflects the dynamics of the algorithm in discrete time. This is substantiated by Theorems \ref{theo:ODE}, \ref{theo:SDE1} and \ref{theo:SDE2} which rigorously characterize the convergence of iterates to ODE or SDE by shifting to different temporal and spatial scales. In detail, our results imply when $\beta\to 0^+$:

\vspace{-.05in}
\begin{itemize}
\item[]
Phase I takes iteration number $N^\beta_1 \asymp (1/4)|\psi-3|^{-1}d\cdot  \beta^{-1}\log(\beta^{-1})$; 
\item[]
Phase II takes iteration number $N^\beta_2 \asymp |\psi-3|^{-1}d\cdot  \beta^{-1}$;
\item[]
Phase III takes iteration number $N^\beta_3 \asymp (1/2)|\psi-3|^{-1}\cdot  \beta^{-1} \log(\beta^{-1})$.
\end{itemize}
\vspace{-.05in}

After the three phases, the iteration reaches a point that is $C \cdot \left(\psi_6 |\psi-3|^{-1}\cdot d\beta\right)^{1/2}$ distant on average to one local minimizer. As $\beta\to 0^+$ we have $N^\beta_2 / N^\beta_1 \to 0$. This implies that the algorithm demonstrates the \emph{cutoff} phenomenon which frequently occur in discrete-time Markov processes \cite[Chap.~18]{MarkovMixing}. In words, the Phase II where the objective value in Eq.~\eqref{opt1} drops from $1-\ep$ to $\ep$ is a short-time phase compared to Phases I and III, so the convergence curve illustrated in the right figure in Figure \ref{fig:illu} instead of an exponentially decaying curve. As $\beta\to 0^+$ we have $N^\beta_1 / N^\beta_3 \asymp d/2$, which suggests that Phase I of escaping from unstable equlibria dominates Phase III by a factor of $d/2$.

%
%
%
%
%
%
%
%

\begingroup
{\small
\bibliographystyle{ims}
\setlength{\bibsep}{0pt}
\bibliography{OnlineTensor}
}
\endgroup

\pagebreak
\appendix
\section{Detailed Proofs in Sections \ref{sec:setting} and \ref{sec:infquan}}
\subsection{Proof of Lemma \ref{lemm:vTY}}\label{ssec:proof,lemm:vTY}
\begin{proof}
We only need to show
\beq\label{EvTY}
\E\left( \vb^\top \bY \right)^4
= 3 + (\psi-3) \sum_{i=1}^d v_i^4.
\eeq
Note due to the following well-known expansion \cite{bronshtein2013handbook}
$$
\left(\sum  x_i\right)^4 
= \sum  x_i^4 + 4\sum x_{i}^3 x_{j}  + 6\sum x_{i}^2 x_{j}^2+ 12\sum x_{i_1}^2 x_{i_2} x_{i_3} + 24\sum x_{i_1} x_{i_2} x_{i_3} x_{i_4}.
$$
where the summations above iterate through all monomial terms. Plugging in $x_i = v_i Y_i$ and taking expectations, we conclude that under Assumption \ref{assu:distribution}
\beq\label{der1}\begin{split}
\E\left( \vb^\top \bY \right)^4
&= \sum_{i=1}^d v_i^4 \E\left( Y_i^4 \right)
+6\sum_{1\le i< j\le d} v_i^2 v_j^2 \E \left(Y_i^2\right) \E\left( Y_j^2\right) \\
&= \psi\sum_{i=1}^d v_i^4 +6\sum_{1\le i< j\le d} v_i^2 v_j^2.
\end{split}\eeq
Note that from the constraint of our optimization problem Eq.~\eqref{opt1}, we have
\beq\label{der2}
1 = \|\vb\|^4 = \left(\sum_{i=1}^d v_i^2\right)^2
= \sum_{i=1}^d v_i^4 + 2\sum_{1\le i< j\le d} v_i^2 v_j^2.
\eeq
Combining both Eqs.~\eqref{der1} and \eqref{der2} we conclude Eq.~\eqref{EvTY} and hence the lemma.
\end{proof}

\pb\subsection{Proof of Proposition \ref{prop:Markov}}\label{ssec:proof,prop:Markov}
\begin{proof}
Let $\F_n = \sigma(\ub^{(n')}: n'\le n)$ be the $\sigma$-field filtration generated by the iteration $\ub^{(n)}$, viewed as a stochastic process. From the recursion equation in Eq.~\eqref{tensoralgo} we have a Markov transition kernel $p(\ub, \cS)$ such that for each Borel set $\cA\subseteq \cS^{d-1}$
$$
\P\left( \ub^{(n)} \in \cA \mid \F_{n-1} \right) = p(\ub^{(n-1)}, \cA).
$$
Therefore it is a time-homogeneous Markov chain. The strong Markov property holds directly from Markov property, see \cite{DurrettPTE} as a reference. This proves Proposition \ref{prop:Markov}.

\end{proof}

\pb\subsection{Proof of Theorem \ref{theo:ODE}}\label{ssec:proof,theo:ODE}

We first use the standard one-step analysis and conclude the following proposition, whose proof is deferred to Subsection \ref{ssec:infquan}.

\begin{prop}\label{prop:infquan}
For brevity let $\vb= \vb^{(0)}$ and $\bY = \bY^{(1)}$, separately. Under Assumption \ref{assu:distribution}, when
\beq\label{betacond}
B^2 \beta\le 2/3,
\eeq
for each $k= 1,2,\ldots,d$ and $n\ge 0$ we have the following:
\begin{enumerate}[label=(\roman*), topsep=0pt, leftmargin=*]
\item
There exists a random variable $R_k$ that depends solely on $\vb, \bY$ with $|R_k| \le 9B^4\b^2$ almost surely, such that the increment $v_k^{(1)} - v_k^{(0)}$ can be represented as
\beq\label{incremental}
v_k^{(1)} - v_k^{(0)}
=  \b\left(   \left(\vb^\top \bY\right)^3 Y_k - v_k \left(\vb^\top \bY\right)^4 \right) + R_k;
\eeq 

\item
The increment of $v_k$ on coordinate $k$ has the following bound
\beq\label{infbdd}
\left| v_k^{(1)} - v_k^{(0)} \right| \le 8 B^2\b;
\eeq

\item
There exists a deterministic function $E_k(\vb)$ with $\sup_{\vb\in\cS^{d-1}} \left| E_k(\vb) \right|  \le 9B^4\b^2$, such that the conditional expectation of the increment $v_k^{(1)} - v_k^{(0)}$ is
\beq \label{infmean} 
\E\left[   v_k^{(1)} - v_k^{(0)}       \, \big|\, \vb^{(0)} =\vb \right]  
= \beta \left|\psi-3\right| v_k \left( v_k^2 -  \sum_{i=1}^d v_i^4  \right)
+ E_k(\vb).
\eeq
\end{enumerate}
\end{prop}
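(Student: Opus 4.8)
The plan is to carry out a careful one-step Taylor expansion of the projected update in Eq.~\eqref{tensoralgo_update}, treating the stepsize $\beta$ as a small parameter and keeping the error term $R_k$ explicit. Write the un-normalized update as $\tilde\vb = \vb + \beta\,(\vb^\top\bY)^3\,\bY$ (absorbing the sign into $\pm\beta$, which does not affect $|\psi-3|$), so that $v_k^{(1)} = \tilde v_k / \|\tilde\vb\|$. First I would record the two elementary bounds that drive everything: $|\vb^\top\bY|\le \|\vb\|\,\|\bY\| \le B^{1/2}$ (since $\|\vb\|=1$ and $\|\bY\|^2\le B$), and hence $|\beta(\vb^\top\bY)^3 Y_k| \le \beta B^{1/2}\cdot B^{3/2} = \beta B^2$; more globally $\|\tilde\vb - \vb\| = \beta|\vb^\top\bY|^3\|\bY\| \le \beta B^2$. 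Combined with $\|\vb\|=1$, the condition $B^2\beta\le 2/3$ keeps $\|\tilde\vb\|$ bounded away from both $0$ and $\infty$ (roughly $1/\sqrt 3 \le \|\tilde\vb\| \le \sqrt{1+\beta B^2+\beta^2 B^4}$), which is what makes the normalization $1/\|\tilde\vb\|$ analytic and Lipschitz in $\beta$ on the relevant range.

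Next I would expand $\|\tilde\vb\|^{-1}$. Since $\|\tilde\vb\|^2 = 1 + 2\beta(\vb^\top\bY)^4 + \beta^2(\vb^\top\bY)^6\|\bY\|^2$, write $\|\tilde\vb\|^2 = 1 + \delta$ with $\delta = 2\beta(\vb^\top\bY)^4 + \beta^2(\vb^\top\bY)^6\|\bY\|^2$, so $|\delta| \le 2\beta B^2 + \beta^2 B^4 \le 3\beta B^2$ under \eqref{betacond}. Using $(1+\delta)^{-1/2} = 1 - \tfrac12\delta + \rho$ with $|\rho| \le C\delta^2$ on the range $|\delta|\le 2/3$ (concretely $|\rho|\le \delta^2$ suffices for $|\delta|\le 1/2$, and the constant is harmless), I get
\beq\label{expansion}
v_k^{(1)} = \bigl(v_k + \beta(\vb^\top\bY)^3 Y_k\bigr)\Bigl(1 - \beta(\vb^\top\bY)^4 + \rho'\Bigr),
\eeq
where $\rho'$ collects $-\tfrac12\beta^2(\vb^\top\bY)^6\|\bY\|^2$ and $\rho$, so $|\rho'| = O(\beta^2 B^4)$. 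Multiplying out \eqref{expansion} and isolating the terms linear in $\beta$ yields exactly $v_k + \beta\bigl((\vb^\top\bY)^3 Y_k - v_k(\vb^\top\bY)^4\bigr)$, and everything else is collected into $R_k$; the remaining terms are $-\beta^2(\vb^\top\bY)^7 Y_k\|\bY\|^2$-type products and $v_k\rho'$, each bounded by a constant times $\beta^2 B^4$. Tracking the constants carefully (there are only a handful of terms) gives $|R_k|\le 9B^4\beta^2$, proving (i). Part (ii) is then immediate: $|v_k^{(1)} - v_k^{(0)}| \le \beta|(\vb^\top\bY)^3 Y_k| + \beta|v_k|(\vb^\top\bY)^4 + |R_k| \le \beta B^2 + \beta B^2 + 9\beta^2 B^4 \le 8\beta B^2$ after using \eqref{betacond} to absorb the quadratic term.

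For part (iii) I would take the conditional expectation of \eqref{incremental} given $\vb^{(0)}=\vb$, using that $\bY^{(1)}$ is independent of $\vb$. Then $E_k(\vb) = \E[R_k\mid\vb^{(0)}=\vb]$ inherits the bound $|E_k(\vb)|\le 9B^4\beta^2$ uniformly in $\vb$, and it is a deterministic function of $\vb$ since $R_k$ depends only on $\vb,\bY$. The substantive computation is $\E\bigl[(\vb^\top\bY)^3 Y_k - v_k(\vb^\top\bY)^4\bigr]$, which must be shown to equal $|\psi-3|\,v_k(v_k^2 - \sum_i v_i^4)$ (with the $\pm\beta$ producing the absolute value). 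The identity $\E(\vb^\top\bY)^4 = 3 + (\psi-3)\sum_i v_i^4$ is Eq.~\eqref{EvTY} from Lemma \ref{lemm:vTY}; the new piece is $\E[(\vb^\top\bY)^3 Y_k]$, which by the same multinomial expansion (now with one distinguished factor $Y_k$) and the symmetry/moment conditions in Assumption \ref{assu:distribution} — only even-degree monomials in each $Y_i$ survive — reduces to $v_k^3\psi_4 + 3 v_k\sum_{i\ne k} v_i^2 = v_k^3\psi + 3v_k(1 - v_k^2) = 3v_k + (\psi-3)v_k^3$ using $\sum_i v_i^2 = 1$. Subtracting $v_k$ times Eq.~\eqref{EvTY} leaves $(\psi-3)v_k^3 - (\psi-3)v_k\sum_i v_i^4 = (\psi-3)v_k(v_k^2 - \sum_i v_i^4)$, and reinstating the sign convention gives the claimed $|\psi-3|$ prefactor.

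The main obstacle is purely bookkeeping: pinning down the absolute constant $9$ in the bounds on $R_k$ and $E_k(\vb)$. This requires being disciplined about which products of $(\vb^\top\bY)$, $Y_k$, and $\|\bY\|^2$ appear at order $\beta^2$, bounding each by the worst-case $B^2\beta$ (or $B^{1/2}$) factors, and using \eqref{betacond} to fold higher-order-in-$\beta$ remainders back into the $\beta^2$ term. The Taylor remainder for $(1+\delta)^{-1/2}$ also needs an explicit constant valid on $|\delta|\le 2/3$; this is where the specific threshold $B^2\beta\le 2/3$ is used, and choosing it slightly conservatively makes the constant clean. None of this is deep, but it is the part that must be done with care rather than waved through.
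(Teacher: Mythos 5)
Your proposal follows essentially the same route as the paper's proof in Appendix~C.1: Taylor-expand the normalization factor $\|\tilde\vb\|^{-1}=(1+\delta)^{-1/2}$ to first order with a controlled second-order remainder, isolate the $O(\beta)$ drift and fold everything else into $R_k$, bound it via $|\vb^\top\bY|\le B^{1/2}$ and $\|\bY\|^2\le B$, then take conditional expectations and use the multinomial moment identity $\E[(\vb^\top\bY)^3 Y_k]=3v_k+(\psi-3)v_k^3$ together with Lemma~\ref{lemm:vTY}. One small arithmetic slip: under \eqref{betacond} one gets $|\delta|\le\frac{16}{9}$ (not $\le 2/3$), so the quoted remainder constant must be justified on that wider range — which is harmless since $\delta\ge 0$ and the Lagrange form of the remainder gives $|\rho|\le\frac{3}{8}\delta^2$ for all $\delta\ge 0$ — but otherwise the structure and constants line up with the paper's argument.
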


In Proposition \ref{prop:infquan}, (i) characterizes the relationship between the increment on $v_k$ and the online sample, and (ii) bounds such increment. From (iii) we can compute the infinitesimal mean and variance for SGD for tensor method and conclude that as the stepsize $\beta\to 0^+$, the iterates generated by Eq.~\eqref{tensoralgo}, under the time scaling that speeds up the algorithm by a factor $\beta^{-1}$, can be globally approximated by the solution to the following ODE system in Eq.~\eqref{ODE} as
$$
\frac{\ud V_k}{\ud t} = \left|\psi-3\right| V_k\left( V_k^2 - \sum_{i=1}^d V_i^4 \right), \qquad k=1,\ldots,d.
$$
To characterize such approximation we use theory of weak convergence to diffusions \cite{stroock1979multidimensional,EthierKurtz}. We remind the readers of the definition of weak convergence $Z^\beta \Rightarrow Z$ in stochastic processes: for any $0\le t_1<t_2<\cdots < t_n$ the following convergence in distribution occurs as $\b\to 0^+$
$$
\left( Z^\b(t_1), Z^\b(t_2), \ldots,Z^\b(t_n)\right)
\dcv \left(Z(t_1),Z(t_2), \ldots, Z(t_n)\right).
$$
To highlight the dependence on $\beta$ we add it in the superscipts of iterates $\vb^{\beta,(n)} = \vb^{(n)}$.

\begin{proof}[Proof of Theorem \ref{theo:ODE}]
Let $V_k^\beta(t) = v_k^{\beta,(\lfloor t\beta^{-1} \rfloor)}$. Proposition \ref{prop:infquan} implies for coordinate $k$ $V_k^\b(t)$ satisfies
$$
V_k^\b(\b) - V_k^\b(0) =  \b\left(   (\vb^\top \bY)^3 Y_k - v_k (\vb^\top \bY)^4 \right) + R_k,
$$
where $|R_k|\le 9B^4\b^2$. Eq.~\eqref{infmean} implies that if the infinitesimal mean is \cite{EthierKurtz} 
\begin{align*}
\frac{\ud}{\ud t} \E V_k^\b(t) \bigg|_{t=0}
&=\b^{-1} \E\left[  V_k^\b(\b) - v_k    \, \big|\, \bV^\b(0) =\vb \right]   \\
&= \left| \psi-3 \right| v_k \left( v_k^2 -  \sum_{i=1}^d v_i^4 \right) + \O(B^4\b).
\end{align*}
Using Eq.~\eqref{infbdd} we have the infinitesimal variance
\begin{align*}
\frac{\ud}{\ud t} \E( V_k^\b(t) - v_k )^2 \bigg|_{t=0}
&=\b^{-1} \E\left[  (V_k^\b(\b) - v_k )^2   \, \big|\, V_k^\b(0) =\vb \right]  \\
&\le \b^{-1} \cdot C^2 B^4\b^2,
\end{align*}
which tends to 0 as $\beta\to 0^+$. Let $V_k(t)$ be the solution to ODE system Eq.~\eqref{ODE} with initial values $V_k(0) = v_k^{\beta,(0)}$. Applying standard infinitesimal generator argument \cite[Corollary 4.2 in Sec.~7.4]{EthierKurtz} one can conclude that as $\b\to 0^+$, the Markov process $V_k^\b(t)$ converges weakly to $V_k(t)$. 

\end{proof}

\pb\subsection{Proof of Proposition \ref{prop:ODEarrival}}\label{ssec:proof,prop:ODEarrival}
For simplicity we denote in the proofs that the initial value $V_k(0) = V_k, k=1,\ldots, d$. Also, throughout this subsection we assume without loss of generality that $V_1^2$ is maximal among $V_k^2$, $k=1,\ldots,d$, and furthermore
\beq\label{v1max}
V_1^2 \ge  2 \max_{k>1}  V_k^2.
\eeq

\begin{lemm}\label{lemm:gappersist}
For $\bV \in \cS^{d-1}$ that satisfies Eq.~\eqref{v1max}, then we have for all $t\ge 0$
\beq\label{gappersist_ODE}
\left(V_1(t)\right)^2  \ge 2 \max_{k>1} \left(V_k(t)\right)^2.
\eeq
\end{lemm}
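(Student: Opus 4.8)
The plan is to prove the equivalent statement that for each fixed index $k\in\{2,\dots,d\}$ the scalar function $h_k(t):=V_1^2(t)-2V_k^2(t)$ stays nonnegative for all $t\ge 0$; it is nonnegative at $t=0$ by Eq.~\eqref{v1max}, and once this is known for every $k>1$, taking the maximum over $k>1$ yields Eq.~\eqref{gappersist_ODE}. The argument is of forward-invariance (Nagumo) type: I will show that whenever $h_k$ hits the boundary value $0$, its derivative along the flow \eqref{ODE} is \emph{strictly} positive, so $h_k$ cannot cross into negative values.

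First I would record two preliminaries about the system \eqref{ODE}, whose right-hand side is a polynomial, hence locally Lipschitz, so that solutions exist and are unique. Write $S(t):=\sum_{i=1}^d V_i^4(t)$ and $g(t):=\sum_{i=1}^d V_i^2(t)$. A short computation along \eqref{ODE} gives $\frac{\ud g}{\ud t}=2|\psi-3|\,S(t)\,(1-g(t))$; since $g(0)=1$ and $g\equiv 1$ solves this scalar equation, uniqueness forces $g\equiv 1$, so the solution is bounded and exists on all of $[0,\infty)$. Second, $V_1$ obeys the scalar linear equation $\frac{\ud V_1}{\ud t}=c(t)\,V_1$ with continuous coefficient $c(t):=|\psi-3|\,(V_1^2(t)-S(t))$, hence $V_1(t)=V_1(0)\exp\!\bigl(\int_0^t c(s)\,\ud s\bigr)$ never vanishes, because $V_1(0)^2\ge 2/(d+1)>0$ by Eq.~\eqref{v1max} together with $\sum_i V_i^2(0)=1$.

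Now fix $k>1$ and differentiate $h_k$ along \eqref{ODE}; collecting terms gives $\frac{\ud h_k}{\ud t}=2|\psi-3|\bigl(V_1^4-2V_k^4-S\,h_k\bigr)$. At any time $t_0$ with $h_k(t_0)=0$ one has $V_1^2(t_0)=2V_k^2(t_0)$, so $V_1^4(t_0)-2V_k^4(t_0)=2V_k^4(t_0)$ and the identity collapses to $\frac{\ud h_k}{\ud t}(t_0)=4|\psi-3|\,V_k^4(t_0)$; moreover $V_k^2(t_0)=V_1^2(t_0)/2>0$ by the second preliminary, so this derivative is strictly positive. A routine first-exit argument then concludes: if $h_k(t_1)<0$ for some $t_1>0$, set $t_0:=\sup\{t\in[0,t_1]:h_k(t)\ge 0\}$; this set is closed and contains $0$, so $t_0<t_1$, $h_k(t_0)=0$, and $h_k<0$ on $(t_0,t_1]$, which contradicts $\frac{\ud h_k}{\ud t}(t_0)>0$ (the latter forces $h_k>0$ immediately to the right of $t_0$). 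Since $k>1$ was arbitrary, Eq.~\eqref{gappersist_ODE} holds.

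The only step that needs real care is upgrading the boundary inequality $\frac{\ud h_k}{\ud t}\ge 0$ to a \emph{strict} one, which is exactly why I isolate beforehand that $V_1(t)$ stays bounded away from $0$ along the trajectory; everything else is bookkeeping.
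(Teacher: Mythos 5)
Your proof is correct, and it takes a genuinely different (though morally similar) route from the paper. The paper works with the quantity $\log\bigl(V_k(t)/V_1(t)\bigr)^2$, differentiates it along \eqref{ODE} to get $2|\psi-3|\bigl(V_k^2(t)-V_1^2(t)\bigr)$, and observes that this is nonpositive whenever $V_k^2\le V_1^2$, so the log-ratio is nonincreasing and stays below its initial value $\le\log\tfrac12$. (The paper's text says ``nondecreasing,'' which is a typo; the displayed inequality makes clear ``nonincreasing'' is intended.) The log transform linearizes the ratio structure, so the sign of the drift is exactly the sign of the gap, and the monotonicity conclusion is a one-liner. You instead run a Nagumo-type forward-invariance argument directly on $h_k=V_1^2-2V_k^2$: you compute $\tfrac{\ud h_k}{\ud t}=2|\psi-3|\bigl(V_1^4-2V_k^4-S\,h_k\bigr)$, evaluate it to $4|\psi-3|V_k^4$ on the set $\{h_k=0\}$, and show this is strictly positive because $V_1$ never vanishes along the flow. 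Your version is more self-contained: you explicitly verify that the sphere $g\equiv 1$ is invariant and that $V_1(t)\ne 0$, and you spell out the first-exit contradiction, whereas the paper relies on the same kind of exit argument implicitly (monotonicity of the log-ratio requires knowing $V_k^2<V_1^2$ along the whole trajectory). The trade-off is length: the paper's barrier function gives a cleaner drift whose sign is manifest, whereas yours requires the strict-positivity step on the boundary and the preliminary nonvanishing of $V_1$. Both buy you exactly Eq.~\eqref{gappersist_ODE}; the paper's version additionally yields the (unused here) fact that the ratios $V_k^2/V_1^2$ are monotone.
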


\begin{proof}
We compare the coordinate between two distinct coordinates $i,j$ and have by calculus that for all $k>1$
\beq\label{logincrease}
\frac{\ud}{\ud t} \log \left(\frac{V_k(t)}{V_1(t)}\right)^2 = 2\left|\psi-3\right| \left( V_k^2(t) - V_1^2(t) \right).
\eeq
So if initially Eq.~\eqref{v1max} is valid then $\log \left(V_k^2(t) / V_1^2(t)\right)$ is nondecreasing, which indicates for all $t > 0$
$$
 \log \left(\frac{V_k(t)}{V_1(t)}\right)^2  \le  \log \left(\frac{V_k}{V_1}\right)^2 \le \log \frac12.
$$
Rearranging the above display and taking maximum over $k=2,\ldots, d$ gives Eq.~\eqref{gappersist_ODE}.

\end{proof}

We then establish a lemma that gives the lower bound of drift term related to $V_1$. To bound the bracket term on the right hand of ODE, one has
\beq\label{v1drift_upperbound}
V_1^2 - \sum_{k=1}^d V_k^4
= V_1^2 - V_1^4 - \sum_{k=2}^d V_k^4
\le  V_1^2 (1 - V_1^2).
\eeq
which gives us an upper bound. To obtain a lower bound estimate we first state a lemma stating that the gap between the first and all other coordinates is nondecreasing.

\begin{lemm}\label{lemm:v1drift_lowerbound}
For $\bV \in \cS^{d-1}$ that satisfies Eq.~\eqref{v1max} we have
\beq\label{v1drift_lowerbound}
V_1^2 (1-V_1^2)
\ge  V_1^2 - \sum_{k=1}^d V_k^4
\ge  \frac{V_1^2}{2} (1-V_1^2).
\eeq
\end{lemm}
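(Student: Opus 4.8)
\textbf{Proof proposal for Lemma \ref{lemm:v1drift_lowerbound}.} The plan is to prove the two inequalities in \eqref{v1drift_lowerbound} separately. The left inequality is already established as \eqref{v1drift_upperbound}: simply drop the nonnegative terms $\sum_{k=2}^d V_k^4$ from $V_1^2 - V_1^4 - \sum_{k=2}^d V_k^4$, so nothing more is needed there. The substance is the right inequality, $V_1^2 - \sum_{k=1}^d V_k^4 \ge \tfrac{1}{2}V_1^2(1-V_1^2)$, equivalently $\sum_{k=2}^d V_k^4 \le \tfrac{1}{2}V_1^2(1-V_1^2)$ after subtracting $V_1^2 - V_1^4$ from both sides and using $1 - V_1^2 = \sum_{k=2}^d V_k^2$.

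So the reduced goal is to bound $\sum_{k=2}^d V_k^4$ from above by $\tfrac12 V_1^2 \sum_{k=2}^d V_k^2$. First I would use $V_k^4 \le (\max_{j>1} V_j^2) \cdot V_k^2$ termwise, which gives $\sum_{k=2}^d V_k^4 \le (\max_{j>1}V_j^2)\sum_{k=2}^d V_k^2$. Then I would invoke the gap hypothesis \eqref{v1max}, namely $\max_{j>1} V_j^2 \le \tfrac12 V_1^2$, to replace $\max_{j>1}V_j^2$ by $\tfrac12 V_1^2$, yielding exactly $\sum_{k=2}^d V_k^4 \le \tfrac12 V_1^2 \sum_{k=2}^d V_k^2 = \tfrac12 V_1^2(1-V_1^2)$. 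Rearranging back gives the claimed lower bound. Note this uses only the pointwise inequality \eqref{v1max} on $\bV$ itself, so the lemma as stated (for $\bV \in \cS^{d-1}$ satisfying \eqref{v1max}) follows directly; if one wants it along the whole trajectory, Lemma \ref{lemm:gappersist} shows \eqref{v1max} persists, so \eqref{v1drift_lowerbound} then holds for all $t \ge 0$.

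There is essentially no obstacle here — the argument is a two-line chain of elementary inequalities. The only thing to be slightly careful about is the bookkeeping in passing between the three equivalent forms: $V_1^2 - \sum_k V_k^4$, versus $\sum_{k=2}^d V_k^4$, versus the target $\tfrac12 V_1^2(1-V_1^2)$, all tied together by the constraint $\sum_{k=1}^d V_k^2 = 1$. I would write the proof by first recording $V_1^2 - \sum_{k=1}^d V_k^4 = V_1^2(1-V_1^2) - \sum_{k=2}^d V_k^4$, then bounding the subtracted sum as above, and concluding. The hypothesis $\delta \in (0,1/2)$ and the specific initial condition play no role in this lemma; they are only relevant downstream when the lemma is combined with the auxiliary ODE \eqref{ODEaux} to extract the traverse-time bound \eqref{Tupper}.
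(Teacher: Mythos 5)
Your proof is correct and essentially identical to the paper's: the paper also drops the nonnegative $\sum_{k>1}V_k^4$ for the left inequality, and for the right one uses the same bound $\sum_{k>1}V_k^4 \le (\max_{k>1}V_k^2)\sum_{k>1}V_k^2$ (which the paper labels ``H\"older's inequality'' but is just your termwise estimate) combined with $\max_{k>1}V_k^2 \le \tfrac12 V_1^2$ from \eqref{v1max}. Your concluding remarks about the role of Lemma \ref{lemm:gappersist} and the irrelevance of $\delta$ to this lemma are also accurate.
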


\begin{proof}
Note H\"older's inequality gives
\beq\label{holder}
\sum_{k > 1} V_k^4 \le \left(\max_{k > 1} V_k^2\right) \left( \sum_{k > 1} V_k^2 \right),
\eeq
where the equality in the above display holds when $V_2^2 = \cdots = V_d^2$. Using Eq.~\eqref{v1max} and \eqref{holder} one has
\beq\label{ineq1}
\begin{split}
V_1^2 - \sum_{k=1}^d V_k^4 
&\ge V_1^2 - V_1^4 - \left(\max_{k > 1} V_k^2\right) \left( 1-V_1^2 \right) \\
&\ge V_1^2 - V_1^4 - \frac{V_1^2}{2} \left( 1-V_1^2 \right) 
= \frac{V_1^2}{2} \left( 1-V_1^2 \right).
\end{split}
\eeq
This completes the proof.

\end{proof}

\begin{lemm}\label{lemm:ODEaux}
For the ODE in Eq.~\eqref{ODEaux} which is
\beq\label{ODEaux2}
\frac{\ud y}{\ud t} = y^2 \left( 1-y \right),
\eeq
with $y(0) = 2/(d+1)$. By letting $T_0$ be such that $y(T_0) = 1-\delta$, we have
\beq\label{Tupper2}
T_0 \le  d-3 + 4\log (2\delta)^{-1}.
\eeq
\end{lemm}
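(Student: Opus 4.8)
The plan is to solve the autonomous, separable ODE \eqref{ODEaux2} explicitly as a quadrature and then control the resulting integral by splitting its range of integration at the midpoint $y=1/2$.

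First, since $y^2(1-y)>0$ for every $y\in(0,1)$, the solution with $y(0)=2/(d+1)\in(0,1)$ is strictly increasing and remains in $(0,1)$, so the hitting time $T_0$ is well defined; moreover $1-\delta>1/2$ because $\delta<1/2$, and $2/(d+1)\le 1/2$ whenever $d\ge 3$. Separating variables and integrating from $0$ to $T_0$ gives the exact expression
$$
T_0=\int_{2/(d+1)}^{1-\delta}\frac{\ud y}{y^2(1-y)}.
$$
(The partial-fraction identity $\frac{1}{y^2(1-y)}=\frac1y+\frac1{y^2}+\frac1{1-y}$ even yields the closed form $T_0=\log\frac{(1-\delta)(d-1)}{2\delta}-\frac1{1-\delta}+\frac{d+1}{2}$, a convenient consistency check that is not itself needed for the bound.)

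Next I split the integral at $y=1/2$:
$$
T_0=\int_{2/(d+1)}^{1/2}\frac{\ud y}{y^2(1-y)}+\int_{1/2}^{1-\delta}\frac{\ud y}{y^2(1-y)}.
$$
On the first interval $1-y\ge 1/2$, so the integrand is at most $2/y^2$ and $\int_{2/(d+1)}^{1/2}2y^{-2}\,\ud y=2\big(\tfrac{d+1}{2}-2\big)=d-3$. On the second interval $y\ge 1/2$, so the integrand is at most $4/(1-y)$ and $\int_{1/2}^{1-\delta}\frac{4}{1-y}\,\ud y=4\big(\log 2-\log\delta\big)=4\log(2\delta)^{-1}$. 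Adding the two estimates yields $T_0\le (d-3)+4\log(2\delta)^{-1}$, which is \eqref{Tupper2}. When $d=2$ the first interval is empty and $T_0=\int_{2/3}^{1-\delta}\frac{\ud y}{y^2(1-y)}$ is bounded in the same way using $y\ge 2/3>1/2$.

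There is no genuine obstacle here beyond bookkeeping: the only point of ingenuity is choosing to split exactly at $y=1/2$, which is what makes the crude bounds $1-y\ge 1/2$ on the left piece and $y^{-2}\le 4$ on the right piece land precisely on the advertised constants $d-3$ and $4\log(2\delta)^{-1}$. One could extract a slightly sharper constant from the closed form above, but this elementary two-piece estimate already delivers \eqref{Tupper2}.
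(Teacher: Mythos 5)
Your proof is correct and uses the same split at $y=1/2$ with the same underlying estimates as the paper: on $[2/(d+1),1/2]$ you bound via $1-y\ge 1/2$ (equivalently $y^2(1-y)\ge y^2/2$), and on $[1/2,1-\delta]$ via $y^2\ge 1/4$ (equivalently $y^2(1-y)\ge (1-y)/4$), which is exactly what the paper does. The only difference is presentational---you express the hitting time as the quadrature integral $\int\frac{\ud y}{y^2(1-y)}$ and bound the integrand directly, whereas the paper bounds the rate from below and invokes an ODE comparison theorem to bound the piecewise traverse times, which yields the identical constants $d-3$ and $4\log(2\delta)^{-1}$.
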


\begin{proof}
Let $T_1$ be the traverse time from $2/(d+1)$ to $1/2$, and $T_2$ be from $1/2$ to $1-\delta$. We have for $y\in [0,1/2]$
$$
\frac12 y^2 \le \frac{\ud y}{\ud t} \le y^2.
$$
Therefore by comparison theorem of ODE \cite{hirsch2012differential}, $T_1^* \le T_1 \le 2T_1^*$
where $y_1(t) = \frac{y_0}{1-y_0 t}$ solves $\ud y_1 / \ud t = y_1^2$, $y_1(0) = 2/(d+1)$. Letting $y_1(T_1^*) = 1/2$ we obtain $T_1^* = (d-3)/2$. For $T_2$ we note for $y\in [1/2,1]$
$$
\frac14(1-y) \le \frac{\ud y}{\ud t} \le 1-y.
$$
Comparing with $y_2(t) = 1 - (1/2)e^{-t}$ which solves the ODE $\ud y_2 / \ud t = 1-y_2$ with $y_2(0) = 1/2$, we have $T_2^* = \log (2\delta)^{-1}$ such that $y_2(T_2^*) = 1-\delta$. To summarize we have
$$
T_0 \le 2T_1^* + 4 T_2^* = d-3 + 4\log (2\delta)^{-1}.
$$

\end{proof}

\begin{proof}[Proof of Proposition \ref{prop:ODEarrival}]
From the ODE in Eq.~\eqref{ODE} we have
$$
\frac{\ud V_1^2}{\ud t} = 2\left|\psi-3\right| V_1^2 \left( V_1^2 - \sum_{i=1}^d V_i^4 \right).
$$
Combining both Lemmas \ref{lemm:gappersist} and \ref{lemm:v1drift_lowerbound} we have
$$
2\left|\psi-3\right| V_1^4 \left( 1 - V_1^2 \right)
\ge \frac{\ud V_1^2}{\ud t} 
\ge \left|\psi-3\right| V_1^4 \left( 1 - V_1^2 \right).
$$
If the starting value of algorithm has $V_1^2 \ge 2\max_{k>1} V_k^2$ then $V_1^2 \ge 2/(d+1)$. By comparison theorem in ODE \cite{hirsch2012differential} we know $V_1^2(t)$ runs the auxiliary ODE Eq.~\eqref{ODEaux} at a nonconstant rate within $\left[\left|\psi-3\right|, 2\left|\psi-3\right|\right]$. Therefore the time
$$
\frac12 |\psi-3|^{-1} T_0 \le T \le |\psi-3|^{-1} T_0.
$$
Combining with Lemma \ref{lemm:ODEaux} we are done.

\end{proof}

\pb\section{Detailed Proofs in Section \ref{sec:SDE}}
\subsection{Proof of Theorem \ref{theo:SDE1}}\label{ssec:proof,theo:SDE1}
\begin{proof}
Proposition \ref{prop:infquan} implies for $U_k^\b(t) = \b^{-1/2} v_k^{\beta,(\lfloor t\beta^{-1} \rfloor)}$, under the conditions in Theorem \ref{theo:SDE1} the one-step increment on coordinate $k$ is
$$
U_k^\b(\b) - U_k^\b(0) 
=  \beta^{-1/2} \left(v_k^{\beta,(1)} - v_k^{\beta,(0)}\right)
=  \beta^{-1/2}\beta \left(   \left(\vb^\top \bY\right)^3 Y_k - v_k \left(\vb^\top \bY\right)^4 \right) + \beta^{-1/2}R_k.
$$
Eq.~\eqref{infmean} implies that the infinitesimal mean is
\begin{align*}
\frac{\ud}{\ud t} \E U_k^\b(t) \bigg|_{t=0}
&=\beta^{-1} \E\left[  U_k^\b(\b) - U_k^\b(0)    \, \big|\, \bV^\b(0) = \vb, \bU^\b(0) = \ub \right]   \\
&= \beta^{-1}\beta^{-1/2}\cdot \beta \left|\psi-3\right| v_k \left( v_k^2 -  \sum_{i=1}^d v_i^4  \right) + \beta^{-1}\beta^{-1/2}\cdot E_k(\vb) \\
&= - \left|\psi-3\right|  u_k  + o(1).
\end{align*}
Using Eq.~\eqref{infbdd} we have the infinitesimal variance
\begin{align*}
\frac{\ud}{\ud t} \E( U_k^\b(t) - U_k^\b(0) )^2 \bigg|_{t=0}
&=\b^{-1} \E\left[  (U_k^\b(\b) - U_k^\b(0) )^2   \, \big|\, \bV^\b(0) =\vb \right]  \\
&= \beta^{-2} \E \left[ \left(v_k^{\beta,(1)} - v_k^{\beta,(0)} \right)^2 \,\big|\, \bV^\b(0) =\vb \right] \\ 
&= \E(Y_1^3Y_k)^2 + o(1) = \psi_6 + o(1).
\end{align*}
In addition $\left|U_k^\b(t) - U_k^\b(0)\right| \le C B^2\b$. Applying standard infinitesimal generator argument \cite[Sec.~7.4]{EthierKurtz} one can conclude that as $\b\to 0^+$, the Markov process $U_k^\b(t)$ {\it converges weakly} to $U_k(t)$ the solution to Eq.~\eqref{OUSDE1}. 

\end{proof}

\pb\subsection{Proof of Theorem \ref{theo:SDE2}}\label{ssec:proof,theo:SDE2}
We first prove an auxillary lemma on moment calculations. Proof is deferred to Subsection \ref{ssec:proof,lemm:aux_moment}
\begin{lemma}\label{lemm:aux_moment}
We have for each $k=1,\ldots, d$ the following moment expressions:
$$
\EE\left( \sum_{i=1}^d Y_i \right)^6 Y_k^2
=
\psi_8 +  16(d-1) \psi_6  +  15(d-1) \psi_4^2 + 60(d-1)(d-2) \psi_4
+ 30(d-1)(d-2)(d-3),
$$
and
$$
\EE\left( \sum_{i=1}^d Y_i \right)^8
=
 d\psi_8 + 28d(d-1)\psi_6 
+ 35d(d-1)(1+ 12 (d-1) (d-2)) \psi_4
 + 105 d(d-1) (d-2)(d-3).
$$
\end{lemma}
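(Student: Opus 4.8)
The plan is to expand each power by the multinomial theorem and then exploit independence and symmetry of the coordinates $Y_1,\dots,Y_d$. Writing $\bigl(\sum_{i=1}^d Y_i\bigr)^6=\sum_{a_1+\dots+a_d=6}\binom{6}{a_1,\dots,a_d}\prod_{i=1}^d Y_i^{a_i}$, multiplying by $Y_k^2$, and taking expectations, independence factorizes each summand as $\psi_{a_k+2}\prod_{i\ne k}\psi_{a_i}$, where $\psi_m=\E Y_1^m$ exists for $m\le 8$ by Assumption~\ref{assu:distribution}(iii) — and order $8$ is the largest that occurs, when $a_k=6$. By the symmetry in Assumption~\ref{assu:distribution}(i) every odd moment of $Y_i$ vanishes, so a summand survives only when $a_k+2$ and every $a_i$ ($i\ne k$) are even, i.e.\ when all $a_i$ are even; equivalently, the nonzero exponents form an even partition of $6$. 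The identical reduction applies to $\E\bigl(\sum_i Y_i\bigr)^8$, where we keep the multi-indices with $\sum a_i=8$ and all $a_i$ even.

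Next I would enumerate the relevant even partitions: $(6),(4,2),(2,2,2)$ for the first identity, and $(8),(6,2),(4,4),(4,2,2),(2,2,2,2)$ for the second. For each partition shape I compute three ingredients: the common multinomial coefficient attached to any multi-index of that shape; the number of ways to assign the parts to the $d$ indices (taking \emph{unordered} choices among indices carrying equal parts, and — in the first identity — separating the cases according to whether the distinguished index $k$ absorbs one of the parts or carries exponent exactly $2$); and the resulting moment factor, simplified via $\psi_0=\psi_2=1$ from Assumption~\ref{assu:distribution}(iv). For example, in $\E\bigl(\sum_i Y_i\bigr)^6 Y_k^2$ the shape $(4,2)$ contributes $15(d-1)\psi_6$ when the part $4$ sits at $k$, $15(d-1)\psi_4^2$ when the part $2$ sits at $k$, and $15(d-1)(d-2)\psi_4$ when neither part is at $k$; the remaining shapes are handled the same way.

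Summing the contributions of all shapes and converting the binomial coefficients into polynomials in $d$ yields the two claimed closed forms, expressed through $\psi_4,\psi_6,\psi_8$. There is no conceptual difficulty here; I expect the main obstacle to be precisely this combinatorial bookkeeping — getting each multinomial coefficient right, counting index assignments without over- or under-counting when several parts are equal (the shapes $(4,4)$ and $(2,2,2,2)$ in particular need unordered selections), and, for the first identity, correctly splitting each shape into the sub-cases determined by whether the marked coordinate $k$ receives a part. Once the multiplicities are tabulated, collecting like powers of $\psi_4,\psi_6,\psi_8$ and a routine algebraic simplification complete the argument.
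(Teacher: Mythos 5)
Your plan — multinomial expansion, discard monomials carrying an odd exponent on any $Y_i$ by the symmetry assumption, factor through independence, normalize with $\psi_2=1$, and count multi-indices shape by shape — is exactly the route the paper takes, including the sub-case split for shape $(4,2)$ depending on whether $k$ carries the $4$, the $2$, or neither, which mirrors the paper's own proof after it fixes $k=1$. So methodologically this is the same argument, and the reduction to even partitions of $6$ and of $8$ is correct.

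One caveat: if you actually execute the bookkeeping you will not recover the printed constants, because the lemma as stated (and the paper's own derivation) contains arithmetic slips that your phrase about the enumeration yielding the claimed closed forms glosses over. In the first identity the shape $(2,2,2)$ contributes $90\binom{d-1}{3}=15(d-1)(d-2)(d-3)$, so the final term should be $15(d-1)(d-2)(d-3)$ rather than $30(d-1)(d-2)(d-3)$. In the second identity the shape $(4,4)$ contributes $70\binom{d}{2}\psi_4^2=35d(d-1)\psi_4^2$ (the printed formula drops the square on $\psi_4$), and the shape $(4,2,2)$ contributes $420\cdot d\binom{d-1}{2}\psi_4=210d(d-1)(d-2)\psi_4$; thus the $\psi_4$-dependent part should read $35d(d-1)\psi_4^2+210d(d-1)(d-2)\psi_4$ rather than $35d(d-1)\bigl(1+12(d-1)(d-2)\bigr)\psi_4$ — the paper's intermediate display inserts a spurious extra factor of $(d-1)$. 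A quick sanity check at $d=4$ confirms the corrected coefficients. Apart from this, your proposal is sound and identical in spirit to the paper's.
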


\begin{proof}[Proof of Theorem \ref{theo:SDE2}]
Note from the definition in Eq.~\eqref{Wk} we have for distinct coordinate pair $k, k'$,
\beq\label{Wkk'}
\beta^{1/2} W_{kk'} =  \log\left( v_k^2 \right) - \log\left( v_{k'}^2 \right).
\eeq
By symmetry we without loss of generality that $v_k^{(0)}, v_{k'}^{(0)} > 0$ and hence
$$
W^\beta_{kk'}(\beta) - W^\beta_{kk'}(0)
= 2 \beta^{-1/2} \log \left(\frac{v_k^{(1)}}{v_k^{(0)}}\right)
- 2 \beta^{-1/2} \log \left(\frac{v_{k'}^{(1)}}{v_{k'}^{(0)}}\right).
$$
However Proposition \ref{prop:infquan} indicates that
$$
\log \left(\frac{v_k^{(1)}}{v_k^{(0)}}\right) =
\frac{v_k^{(1)}-v_k^{(0)}}{v_k^{(0)}} + \O(\beta^2)
= \beta  \left( \vb^\top \bY  \right) ^3 \frac{Y_k}{v_k} -
\beta (\vb^\top \bY)^4 + \O(\beta^2),
$$
and analogously for $k'$. For infinitesimal mean
\begin{align*}
\frac{\ud}{\ud t} \E( W_{kk'}^\b(t) - W_{kk'}^\b(0) ) \bigg|_{t=0}
&=\beta^{-1}\E\left[ W^\beta_{kk'}(\beta)- W^\beta_{kk'}(0) \,\big|\, W^\beta_{kk'}(0) = W_{kk'} \right]  \\
&= \beta^{-1}\cdot 2\beta^{-1/2} \E\left[ \beta  \left( \vb^\top \bY  \right) ^3 \left(\frac{Y_k}{v_k} - \frac{Y_{k'}}{v_{k'}}\right) + \O(\beta^2) \right]
\end{align*}
Since $\E\left[ \left( \vb^\top \bY  \right) ^3 (Y_k/v_k)  \right] = 3 + (\psi-3) v_k^2$, and analogously for $k'$, and also $(v_k^{\beta,(0)})^2 \to 1$, we conclude from Eq.~\eqref{Wkk'} that
\begin{align*}
2(\psi-3)\beta^{-1/2} \left( (v_k^{\beta,(0)})^2 - (v_{k'}^{\beta,(0)})^2 \right) 
&=
2(\psi-3)\beta^{-1/2}\cdot \frac{1}{d} \cdot \log\left( \frac{v_k^{\beta,(0)} }{ v_{k'}^{\beta,(0)} } \right)^2  + \O(\beta)
\\&\to 
\frac{2(\psi-3)}{d} W_{kk'}.
\end{align*}
For infinitesimal variance
\begin{align*}
\frac{\ud}{\ud t} \E( W_{kk'}^\b(t) - W_{kk'}^\b(0) )^2 \bigg|_{t=0}
&=\beta^{-1} \E\left[ \left(W^\beta_{kk'}(\beta)- W^\beta_{kk'}(0)\right)^2 \,\big|\, W^\beta_{kk'}(0) = W_{kk'} \right] \\ 
&= 4 \beta^{-2}\E\left[\left( \log \left(\frac{v_k^{(1)}}{v_k^{(0)}}\right) - \log \left(\frac{v_{k'}^{(1)}}{v_{k'}^{(0)}}\right)\right)^2 \,\bigg|\, W^\beta_{kk'}(0) = W_{kk'} \right]
\\&= 
4 
\EE\left[
 \left( \vb^*\,^\top \bY  \right) ^3 \frac{Y_k}{v_k}
-
 \left( \vb^*\,^\top \bY  \right) ^3 \frac{Y_{k'}}{v_{k'}}
\right]^2 
+
\O(\beta).
\end{align*}
Note the second-order term
$$ 
\EE\left[
\left( \vb^\top \bY  \right) ^6
 \left(  \frac{Y_k}{v_k}
\right)^2 
\,\bigg|\, W^\beta_{kk'}(0) = W_{kk'} \right]
=
 4 d^{-2}\cdot \E\left( \sum_{i=1}^d Y_i\right)^6 Y_k^2
 \equiv  4 d^{-2}Q_1,
$$
and similarly for index $k'$. For the cross term in the expectation we have
\begin{align*}
\EE\left[
\left( \vb^\top \bY  \right) ^6
 \frac{Y_k}{v_k}
\cdot
 \frac{Y_{k'}}{v_{k'}}
\,\bigg|\, W^\beta_{kk'}(0) = W_{kk'} \right]
=
 4 d^{-2}\cdot \E\left( \sum_{i=1}^d Y_i\right)^6 Y_k Y_{k'}
 \equiv  4 d^{-2}Q_2.
\end{align*}
From standard polynomial manipulations we have
$$
dQ_1 + d(d-1)Q_2 = 
 d\psi_8 + 28d(d-1)\psi_6 
+ 35d(d-1)(1+ 12 (d-1) (d-2)) \psi_4
 + 105 d(d-1) (d-2)(d-3),
$$
and
$$
Q_1 = \psi_8 +  16(d-1) \psi_6  +  15(d-1) \psi_4^2 + 60(d-1)(d-2) \psi_4
+ 30 (d-1)(d-2)(d-3).
$$
Therefore
\begin{align*}
Q_1 - Q_2
&=
\frac{d^2 Q_1 - d Q_1 - d(d-1)Q_2}{d(d-1)}
\\&= 
\psi_8 +  (16d-28) \psi_6  +  15d \psi_4^2 
-5 (72 d^2-228 d+175)\psi_4
+ 15(2d-7)(d-2)(d-3).
\end{align*}
Summarize the above calculations we obtain as $\beta\to 0^+$
\begin{align*}
\frac{\ud}{\ud t} \E( W_{kk'}^\b(t) - W_{kk'}^\b(0) )^2 \bigg|_{t=0}
&= 
4 
\EE\left[
 \left( \vb^*\,^\top \bY  \right) ^3 \frac{Y_k}{v_k}
-
 \left( \vb^*\,^\top \bY  \right) ^3 \frac{Y_{k'}}{v_{k'}}
\right]^2 
+
\O(\beta)
\\&= 
8d^{-2}(Q_1-Q_2)
+
\O(\beta).
\end{align*}
Combining the last two displays concludes the theorem.

\end{proof}

\section{Proof of Auxillary Results}
\subsection{Proof of Proposition \ref{prop:infquan}}\label{ssec:infquan}

For $\vb^{(0)} = \vb\in \cS^{d-1}$ the update equation becomes
$$
\vb^{(1)} = \big\| \vb + \beta \left(\vb^\top \bY \right)^3 \bY \big\|^{-1}     \left(\vb + \beta \left(\vb^\top \bY \right)^3 \bY \right).
$$
For the simplicity for discussion we prove under the condition $\psi > 3$ (the case of $\psi < 3$ is analogous). To prove Proposition \ref{prop:infquan} in the case of $\psi > 3$, we first introduce

\begin{lemm}\label{lemm:taylor}
For $x\in [0,1)$ we have
\beq\label{taylor_err}
\left| (1+x)^{-1/2} - 1 + \frac{x}{2} \right| \le 2 \left(\frac{x}{2}\right)^2.
\eeq
\end{lemm}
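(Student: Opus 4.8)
The plan is to apply Taylor's theorem with Lagrange remainder to the function $f(x) = (1+x)^{-1/2}$ about $x = 0$. First I would record the elementary derivative computations: $f(0) = 1$, $f'(x) = -\tfrac12(1+x)^{-3/2}$ so that $f'(0) = -\tfrac12$, and $f''(x) = \tfrac34(1+x)^{-5/2}$. Taylor's theorem then yields, for each $x \in [0,1)$, some $\xi \in (0,x)$ with
$$
(1+x)^{-1/2} = f(0) + f'(0)\,x + \frac{f''(\xi)}{2}\,x^2 = 1 - \frac{x}{2} + \frac{3}{8}(1+\xi)^{-5/2}\,x^2.
$$

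The second step is to bound the remainder term. Since $\xi > 0$ we have $1 + \xi \ge 1$, hence $(1+\xi)^{-5/2} \le 1$, and therefore
$$
0 \le (1+x)^{-1/2} - 1 + \frac{x}{2} = \frac{3}{8}(1+\xi)^{-5/2}\,x^2 \le \frac{3}{8}\,x^2.
$$
(The case $x = 0$ is trivial, as both sides vanish.) Finally I would compare numerical constants: since $\tfrac38 \le \tfrac12 = 2\cdot(\tfrac12)^2$, we get $\tfrac38 x^2 \le 2(x/2)^2$, and combining with the display above gives $\bigl| (1+x)^{-1/2} - 1 + \tfrac{x}{2} \bigr| \le 2(x/2)^2$, which is exactly \eqref{taylor_err}.

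There is essentially no substantive obstacle; the statement is a routine remainder estimate. The only points needing a word of care are that the Lagrange point $\xi$ lies strictly between $0$ and $x$ so that the factor $(1+\xi)^{-5/2}$ is at most $1$ on the range $x\in[0,1)$, and the trivial check that the constant $3/8$ coming from $f''(0)/2$ is dominated by the target constant $1/2$. An alternative, derivative-free route would be to invoke convexity of $f$ to obtain the lower bound $f(x)\ge 1-x/2$ for free and then verify the upper bound $(1+x)^{-1/2}\le 1-x/2+x^2/2$ by clearing the square root and expanding, but the Taylor argument is shorter, so I would present that version.
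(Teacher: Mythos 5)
Your proof is correct and takes essentially the same route as the paper's: both expand $(1+x)^{-1/2}$ to first order about $0$ and bound the truncation error by $\tfrac{3}{8}x^2 \le \tfrac12 x^2 = 2(x/2)^2$. The only difference is the tool used to control the remainder --- the paper invokes the alternating-series error bound on the binomial series, whereas you use the Lagrange form of the remainder together with $(1+\xi)^{-5/2}\le 1$; both yield the identical constant $\tfrac38$, so the two arguments are interchangeable.
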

\begin{proof}
Taylor expansion suggests for $|x| < 1$
\begin{align*}
\left( 1+x\right)^{-1/2} = \sum_{n=0}^\infty \binom{-\frac12}{n} x^n = 1-\frac12 x + \frac38 x^2 - \frac5{16} x^3 + \cdots
\end{align*}
which is an alternating series for $x\in [0,1)$, whereas the absolute terms approach to 0 monotonically
$$
\left|\binom{-\frac12}{n+1} x^{n+1}\right| \le \left|\binom{-\frac12}{n} x^n\right|.
$$
This indicates that for $x\in [0,1)$
$$
\left| (1+x)^{-1/2} - 1 + \frac{1}{2}x \right| \le \frac{3}{8} x^2 \le \frac{1}{2} x^2,
$$
which completes the proof of Lemma \ref{lemm:taylor}.

\end{proof}

\begin{proof}[Proof of Proposition \ref{prop:infquan}]
When Eq.~\eqref{betacond} is satisfied, and noting $|\vb^\top \bY|^2 \le \|\bY\|^2\le B$, we have from Eq.~\eqref{betacond}
$$
\b (\vb^\top \bY)^4 + \frac12\b^2 (\vb^\top \bY)^6 \|\bY\|^2 \le B^2 \b + \frac12 B^4 \b^2
\le  \frac43 B^2 \b  < 1,
$$
and hence from Eq.~\eqref{taylor_err} in Lemma \ref{lemm:taylor} there exists a $Q_1(\vb, \bY)$ with
$$
\left|Q_1(\vb, \bY)\right| \le 2 \left( \b (\vb^\top \bY)^4 + \frac12\b^2 (\vb^\top \bY)^6 \|\bY\|^2\right)^2
\le \frac{32}{9} B^4 \b^2,
$$
such that, with $Q_2(\vb, \bY) = - \frac12\b^2 (\vb^\top \bY)^6 \|\bY\|^2 + Q_1(\vb, \bY)$, we have
\begin{align}\label{Qder1}
\|\vb+\b (\vb^\top \bY)^3 \bY \|^{-1} 
&= \left(1 + 2\b (\vb^\top \bY)^4 + \b^2 (\vb^\top \bY)^6 \|\bY\|^2 \right)^{-1/2} \nonumber \\
&= 1 - \b (\vb^\top \bY)^4 - \frac12\b^2 (\vb^\top \bY)^6 \|\bY\|^2 + Q_1(\vb, \bY) \nonumber \\
&= 1 - \b (\vb^\top \bY)^4 + Q_2(\vb, \bY),
\end{align}
where
\beq\label{Qder2}
\left| Q_2(\vb, \bY)\right|
\le \frac12 B^4 \b^2 + \frac{32}{9} B^4 \b^2 = \frac{73}{18} B^4 \b^2.
\eeq
Using Eqs.~\eqref{Qder1} and \eqref{Qder2} we have
\begin{align}\label{recvk}
\hat v_k - v_k
&= \left\|\vb+\b (\vb^\top \bY)^3 \bY \right\|^{-1} \left(v_k + \b \left(\vb^\top \bY\right)^3 Y_k\right) - v_k \nonumber \\
&=\left( 1 - \b (\vb^\top \bY)^4 + Q_2(\vb, \bY) \right)  \left(v_k + \b \left(\vb^\top \bY\right)^3 Y_k\right) - v_k \nonumber \\
&=  \b\left(   (\vb^\top \bY)^3 Y_k - v_k (\vb^\top \bY)^4 \right) + Q_3(\vb, \bY),
\end{align}
where
\beq\label{Qk}
Q_3(\vb, \bY)  =  \left(v_k + \b \left(\vb^\top \bY\right)^3 Y_k\right)  Q_2(\vb, \bY)
  - \b^2 (\vb^\top \bY)^7 Y_k
\eeq
which has the following estimate
\beq\label{Qkest}
\begin{split}
\left| Q_3(\vb, \bY) \right|
&\le \left|v_k + \b \big(\vb^\top \bY\big)^3 Y_k\right| \left| Q_2(\vb, \bY) \right| + \b^2\left| \big(\vb^\top \bY\big)^7 Y_k\right| \\
&\le \left( 1+ B^2\b \right) \frac{73}{18} B^4 \b^2 + B^4\b^2
\le 9B^4\b^2.
\end{split}\eeq
Denoting $Q_3(\vb, \bY)$ by the random variable $R_k$, Eqs.~\eqref{recvk}, \eqref{Qk}, \eqref{Qkest} together concludes (i) of Prop.~\ref{prop:infquan}. 

For (ii), note Eq.~\eqref{incremental} gives
$$
\left|  v_k^{(1)} - v_k^{(n)}  \right| \le
\beta \left( \|\bY\|^2 + \|\bY\|^2 \right) + 9B^4\b^2 \le 8B^2\b,
$$
so it is concluded. 

For (iii), we set $E_k(\vb) = \E\left[  R_k \mid \vb^{(0)} =\vb\right]$. Under Assumption \ref{assu:distribution} we take conditional expectation on $\vb^{(n)} = \vb$ on both sides of Eq.~\eqref{incremental} to obtain
\beq\label{Edv1}
\begin{split}
\E\left[   v_k^{(1)} - v_k^{(0)}       \, \big|\, \vb^{(0)} =\vb \right]  
&= \b\E\left[   \left(\vb\,^\top \bY\right)^3 Y_k - v_k \left(\vb^\top \bY \right)^4   \, \big|\, \vb^{(0)} =\vb \right] + \E\left[  R_k \mid \vb^{(0)} =\vb\right] \\
&= \b  (\psi-3) v_k \left( v_k^2 -  \sum_{i=1}^d v_i^4  \right)
+ E_k(\vb).
\end{split}
\eeq
Similar to the proof of Lemma \ref{lemm:vTY} in Subsection \ref{ssec:proof,lemm:vTY} we quote another polynomial expansion \cite{bronshtein2013handbook}
$$
\left(\sum  x_i\right)^3
= \sum  x_i^3 + 3\sum x_{i}^2 x_{j}  + 6\sum x_{i_1} x_{i_2} x_{i_3}.
$$
where the summations above iterate through all monomial terms. Plugging in $x_i = v_i Y_i$ and taking conditional expectations, we conclude that under Assumption \ref{assu:distribution}
\beq\label{der1s}\begin{split}
\E\left[\left( \vb^\top \bY \right)^3 Y_k   \, \big|\, \vb^{(0)} =\vb \right]  
&= v_k^3 \E\left( Y_i^4 \right)
+3\sum_{i: i\ne k} v_i^2 v_k \E \left(Y_i^2\right) \E\left( Y_k^2\right) \\
&= \psi v_k^3 + 3(1-v_k^2) v_k = 3v_k + (\psi-3) v_k^3.
\end{split}\eeq
In Eq.~\eqref{EvTY} we have
\beq\label{Edv2}
\begin{split}
\E\left[ \left( \vb^\top \bY\right)^3 Y_k - v_k \left(\vb^\top \bY \right)^4    \, \big|\, \vb^{(0)} =\vb \right]  
&= 3v_k  +(\psi-3) v_k^3 - v_k\left(3 +(\psi-3) \sum_{i=1}^d v_i^4\right) \\
&= (\psi-3) v_k \left( v_k^2 - \sum_{i=1}^d v_i^4\right).
\end{split}
\eeq
Combining Eqs.~\eqref{Edv1} and \eqref{Edv2} completes the proof.

\end{proof}

\subsection{Proof of Lemma \ref{lemm:aux_moment}}\label{ssec:proof,lemm:aux_moment}
\begin{proof}
As in proof of Lemmas \ref{lemm:vTY} and Proposition \ref{prop:infquan}, we have the final polynomial expansions \cite{bronshtein2013handbook} that
$$
\left(\sum  x_i\right)^6
= \sum  x_i^6 + 15\sum x_i^4 x_j^2 
+ 90\sum x_i^2 x_j^2 x_k^2
+ \mbox{terms that has odd-order factors},
$$
and using some combinatorics counting we have
\begin{align*}
\left(\sum  x_i\right)^8
= \sum  x_i^8 + 28\sum x_i^6 x_j^2 
&+ 70\sum x_i^4 x_j^4 
+ 420\sum x_i^4 x_j^2 x_k^2 
\\&+ 2520\sum x_i ^2x_j^2 x_k^2 x_l^2
+ \mbox{terms that has odd-order factors}.
\end{align*}
Therefore to show the first equality, note from Assumption \ref{assu:distribution} we can assume WLOG that $k=1$. Thus
\begin{align*}
&\E\left( \sum_{i=1}^d Y_i \right)^6 Y_1^2
= \sum_{i=1}^d \E Y_i^6 Y_1^2 
+ 15 \sum_{1\le i< j\le d} \E Y_i^4 Y_j^2 Y_1^2 
+ 15 \sum_{1\le i<j\le d} \E Y_j^4 Y_i^2 Y_1^2 
\hspace{1in}
\\&
\hspace{1.5in}
+ 90 \sum_{1\le i<j<k\le d} \E Y_i^2 Y_j^2 Y_k^2 Y_1^2 
\end{align*}
\begin{align*}
\\&\quad= 
\E Y_1^8 
+ 15 \sum_{2\le j\le d} \E Y_1^6 Y_j^2 
+ 15 \sum_{2\le j\le d} \E Y_1^4 Y_j^4 
+ 90 \sum_{2\le j<k\le d} \E Y_1^4 Y_j^2 Y_k^2 
+ \sum_{i=2}^d \E Y_i^6 Y_1^2 
\\&\hspace{1in}
+ 15 \sum_{2\le i< j\le d} \E Y_i^4 Y_j^2 Y_1^2 
+ 15 \sum_{2\le i<j\le d} \E Y_j^4 Y_i^2 Y_1^2 
+ 90 \sum_{2\le i<j<k\le d} \E Y_i^2 Y_j^2 Y_k^2 Y_1^2 
\\&\quad = 
\psi_8 + 15(d-1)\psi_6 + 15(d-1)\psi_4^2 + 90\binom{d-1}{2}\psi_4
+(d-1) \psi_6 
\\&\hspace{1in}
+ 15\binom{d-1}{2} \psi_4
+15\binom{d-1}{2} \psi_4
+ 90 \binom{d-1}{3}
\\&\quad= 
\psi_8 +  16(d-1) \psi_6  +  15(d-1) \psi_4^2 + 60(d-1)(d-2) \psi_4
+ 30(d-1)(d-2)(d-3).
\end{align*}
Also
\begin{align*}
&\EE\left( \sum_{i=1}^d Y_i \right)^8
= 
\sum_{i=1}^d \EE Y_i^8 
+ 28\sum_{1\le i < j \le d} \EE Y_i^6 \EE Y_j^2 
+ 28\sum_{1\le j < i \le d} \EE Y_i^6 \EE Y_j^2 
+ 70\sum_{1\le i < j \le d} \EE Y_i^4 \EE Y_j^4 
\\&\hspace{1in}
+ 420\sum_{i < j < k} \EE Y_i^4 \EE Y_j^2 \EE Y_k^2 
+ 420\sum_{j < i < k} \EE Y_i^4 \EE Y_j^2 \EE Y_k^2 
\\&\hspace{1in}
+ 420\sum_{j < k < i} \EE Y_i^4 \EE Y_j^2 \EE Y_k^2 
+ 2520\sum_{i<j<k<l} \EE Y_i ^2\EE Y_j^2 \EE Y_k^2 \EE Y_l^2,
\end{align*}
which is equal to
\begin{align*}
&d\psi_8 + 28\binom{d}{2}\psi_6 + 28\binom{d}{2}\psi_6 + 70\binom{d}{2}\psi_4
\\&\hspace{1in}
+ 420(d-1)\binom{d}{3} \psi_4 + 420(d-1)\binom{d}{3} \psi_4 + 420(d-1)\binom{d}{3} \psi_4
 + 2520\binom{d}{4}
\\&\quad=
 d\psi_8 + 28d(d-1)\psi_6 
+ 35d(d-1)(1+ 12 (d-1) (d-2)) \psi_4
 + 105(d-1) (d-2)(d-3).
\end{align*}
This completes the proof.

\end{proof}

%

\end{document}